\newtheorem{definition}{Definition}
\newtheorem{theorem}{Theorem}
\title{STORM: Foundations of End-to-End Empirical Risk Minimization on the Edge}
\author{
  Benjamin Coleman\thanks{Equal contribution.} \\
  Electrical and Computer Engineering\\
  Rice University\\
  Houston, Texas, USA \\
  \texttt{ben.coleman@rice.edu} \\
  \And
  Gaurav Gupta\textsuperscript{*} \\
  Electrical and Computer Engineering\\
  Rice University\\
  Houston, Texas, USA \\
  \texttt{gaurav.gupta@rice.edu} \\
  \And 
  John Chen\\
  Department of Computer Science\\
  Rice University\\
  Houston, Texas, USA \\
  \texttt{jc114@rice.edu} \\
  \And 
  Anshumali Shrivastava\\
  Department of Computer Science\\
  Rice University\\
  Houston, Texas, USA \\
  \texttt{anshumali@rice.edu} \\
}
\begin{document}

\maketitle

\begin{abstract}
Empirical risk minimization is perhaps the most influential idea in statistical learning, with applications to nearly all scientific and technical domains in the form of regression and classification models. To analyze massive streaming datasets in distributed computing environments, practitioners increasingly prefer to deploy regression models on edge rather than in the cloud. By keeping data on edge devices, we minimize the energy, communication, and data security risk associated with the model. Although it is equally advantageous to train models at the edge, a common assumption is that the model was originally trained in the cloud, since training typically requires substantial computation and memory. To this end, we propose STORM, an online sketch for empirical risk minimization. STORM compresses a data stream into a tiny array of integer counters. This sketch is sufficient to estimate a variety of surrogate losses over the original dataset. We provide rigorous theoretical analysis and show that STORM can estimate a carefully chosen surrogate loss for the least-squares objective. In an exhaustive experimental comparison for linear regression models on real-world datasets, we find that STORM allows accurate regression models to be trained. 
\end{abstract}

\section{Introduction}

Global spending on the Internet of Things (IoT) is expected to reach 1 trillion US dollars by 2021, according to a recent spending report~\cite{IoTSpendingGuide}. Machine Learning (ML) for IoT is a pressing issue from both an economic perspective and an algorithm development perspective. Modern ML algorithms are expensive in terms of latency, memory and energy. While these resource demands are not limiting in a cloud computing environment, they pose serious problems for ML on mobile and edge devices. The current trend in IoT research is to minimize the cost, memory and energy associated with inference. Since training is substantially more costly than inference, models are usually trained in the cloud before being modified and deployed to edge devices. 

However, there are distinct advantages to training on the edge. Since data is generally collected from edge and mobile devices, the data is inevitably transmitted to the cloud or some other compute-intensive source for training. Even with compression, data transfer is an energy intensive procedure that consumes substantial resources. Data transfer also introduces privacy vulnerabilities. In~\cite{IoTnetwork}, the authors argue that it is significantly more efficient to do computations on the device, rather than transmitting data out of the edge. The computation on mobile and edge devices has grown significantly, but memory is limited and large data transfer is prohibitive due to energy costs.

\paragraph{Foundations of Training ML Models on the Edge:} In this paper, we build the foundation of the feasibility of training machine learning models directly on edge devices. There are few existing techniques because training at the edge involves several challenges. First, the data must be compressed to an extremely small size so that it can comfortably fit on the device. Due to resource limitations, it is impossible to explicitly store the entire dataset. The compression technique must be one-pass and computationally efficient. The compressed summary, or sketch, of the data must be updateable with new information collected by the device. This tiny sketch should be sufficient to train the ML model. We also anticipate that the sketches must be combined across multiple devices to train a larger, more powerful model. One can imagine a scenario where IoT devices propagate their sketches along the edges of a communication network, updating their models and passing the information forward. This type of system requires a \emph{mergeable} summary~\cite{rabkin2014aggregation}. 

\paragraph{Formal Setting - Sketches for Regression:}
We focus on compressed and distributed empirical risk minimization. Given an $N$-point dataset $\mathcal{D} = \{(x_1,y_1) ... (x_N,y_N)\}$ of $d$-dimensional examples observed in the streaming setting~\cite{fiat1998online}, the goal is to construct a small structure $\mathcal{S}$ that can estimate the regression loss over $\mathcal{D}$. The size of $\mathcal{S}$ must scale well with the number of features and the size of $\mathcal{D}$. Ideally, the sketch should occupy only a few megabytes (MB) of memory and be small enough to transmit over a network. $\mathcal{S}$ should also be a mergeable summary~\cite{agarwal2013mergeable}. If we construct a sketch $\mathcal{S}_1$ of a dataset $\mathcal{D}_1$ and $\mathcal{S}_2$ of $\mathcal{D}_2$, we should be able to merge $\mathcal{S}_1$ with $\mathcal{S}_2$ to get a sketch of the combined dataset $\mathcal{D}_1\cup \mathcal{D}_2$. These properties ensure that the sketch is useful in distributed streaming settings.

\subsection{Related Work}

Regression models are perhaps the most well-known embodiment of empirical risk minimization, having been independently studied by the machine learning, statistics, and computer science communities. Existing work on compressed regression is largely motivated by prohibitive runtime costs for large, high-dimensional linear models and prohibitive storage costs for large data matrices. The canonical least-squares solution requires computation and storage that scales quadratically with dimensions and linearly with $N$, leading to a proliferation of approximate solutions. Approximations generally fall into two categories: sketching and sampling. One may either apply linear projections to reduce the dimensionality of the problem or apply sampling to reduce $N$. Regardless of the approach taken, there is a well-known information theoretic lower bound on the space required to solve the problem within an $\epsilon$ approximation to the optimal loss~\cite{clarkson2009numerical}. 

Sketch-based approaches to online compressed regression approximate the large $N\times d$ data matrix with a smaller representation. Such algorithms are agnostic about the data in that they rely on subspace embeddings rather than structure within the data matrix to provide a reasonable size reduction. In~\cite{clarkson2009numerical}, the authors propose a sketch that attains the memory lower bound for compressed linear regression. The sketch stores a random linear projection that can recover an approximation to the regression model. Random Gaussian, Hadamard, Haar, and Bernoulli sign matrices been used in this framework with varying degrees of theoretical and practical success~\cite{dobriban2019asymptotics}. While these sketches can only estimate the L2 loss, the ``sketch-and-solve'' technique has been extended to regression with the L1 objective~\cite{sohler2011subspace} and several other linear problems. 

Strategies based on sampling are attractive in that they can dramatically reduce the size $N$ of the data that is used with linear algebra routines. Sampling methods can also be applied to more loss functions than sketching methods. The simplest and fastest approach is random sampling. Unfortunately, random sampling has undesirable worst-case performance as it can easily miss important samples that contribute substantially to the model. As a result, adaptive sampling procedures based on leverage score sampling have been developed for the regression problem. Leverage scores can be approximated online ~\cite{cohen2016online}, but are somewhat computationally expensive in practice.

\paragraph{Our Contribution:}
In this work, we propose a sketch with a large set of desirable practical properties that can perform empirical risk minimization with a small memory footprint. The count-based nature of our Sketches Toward Online Risk Minimization (STORM) enables optimizations that are not possible for other methods. Specifically, our contributions are as follows:
\begin{itemize}
    \item We propose an online sketching algorithm (STORM) that can estimate a class of loss functions using only integer count values. Our sketch is appropriate for distributed data settings because it is small, embarrassingly parallel, mergeable via addition, and able to work with streaming data. 
    \item We characterize the set of loss functions that can be approximated with STORM. The price of efficient loss estimation is a restriction on the kinds of losses we can use. While the class of STORM-approximable losses does not include popular regression and classification losses, we derive STORM-approximable surrogate losses for linear regression and linear max margin classification. 
    \item We show how to perform optimization over STORM sketches using derivative free optimization and linear optimization. We provide experiments with the linear regression objective showing that STORM-approximable surrogates are a resource-efficient way to train models in distributed settings. 
\end{itemize}

\section{Background}
\subsection{Empirical Risk Minimization}

In the standard statistical learning framework, we are given a training dataset $\mathcal{D} = \{(x_i,y_i) \in \mathcal{D}\times \mathcal{Y}\}^{n}_{i = 1}$ of examples and asked to select a function $h:\mathcal{X}\to\mathcal{Y}$ that can predict $y$ given $x$. In this paper, we will present our algorithms for the data space $\mathcal{X} = \mathbb{R}^d$ and output space $\mathcal{Y} = \mathbb{R}$, but our results also hold for other metric spaces. The learning problem is select a hypothesis $h_{\theta}$ (a function parameterized by $\theta$) that yields good predictions, as measured by a loss function $\mathrm{loss}(h_{\theta}(x),y)$. In empirical risk minimization (ERM), we select $\theta$ to minimize the average loss on the training set.

$$ \hat{\theta} = \operatorname*{arg\,min}_\theta \sum_{i = 1}^n \mathrm{loss}(h_{\theta}(x_i),y_i)$$

\paragraph{Linear Regression:} 
We will often use linear regression as an example. Linear regression is an embodiment of empirical risk minimization where $h_{\theta}(\mathbf{x}) = \langle \theta, \mathbf{x}\rangle$. For most applications, $\theta$ is found using the least-squares or L2 loss: $\mathrm{loss}(h_{\theta}(\mathbf{x})) = ||h_{\theta}(\mathbf{x}) - y||_2^2$. The unconstrained L2 loss is smooth and strongly convex, with desirable convergence criteria. The parameter $\theta$ is found using ERM, either using gradient descent or via a closed-form solution from the matrix formulation of the model $\textbf{y} = \textbf{X} \theta$. For our discussion, it will be important to express the loss in terms of the concatenated vector $[\mathbf{x}_i,y_i]$: 
$$\hat{\theta} = \operatorname*{arg\,min}_\theta \sum_{i=1}^{n} (\langle [\mathbf{x_i},y_i], [\theta,-1] \rangle)^2 = \operatorname*{arg\,min}_\theta \| \textbf{y} -\textbf{X} \theta\|_2^2$$

\subsection{Locality Sensitive Hashing}
\label{sec:lsh}
Locality sensitive hashing (LSH) is a technique from computational geometry originally introduced for efficient approximate nearest neighbor search. An LSH family $\mathcal{F}$ is a family of functions $l(\mathbf{x}): \mathcal{X}\to \mathbb{Z}$ with the following property: Under $l(\mathbf{x})$, similar points have a higher probability than dissimilar points of having the same hash value, or \textit{colliding} (i.e. $l(\mathbf{x}_1) = l(\mathbf{x}_2)$).  The notion of similarity is usually based on the distance measure of the metric space $\mathcal{X}$. For instance, there are LSH families for the Jaccard~\cite{broder1997minhash}, Euclidean~\cite{datar2004locality,dasgupta2011fast} and angular distances~\cite{charikar2002similarity}. If we allow asymmetric hash constructions (i.e. $l_1(\mathbf{x_1} = l_2(\mathbf{x}_2)$), $\mathbf{x}_1$ and $\mathbf{x}_2$ may collide based on other properties such as their inner product~\cite{shrivastava2014mips}. To accommodate asymmetric LSH, we use the following definition of LSH. Our definition is a strict generalization of the original~\cite{indyk1998approximate}, which can be recovered by setting probability thresholds.

\begin{definition}
\label{def:lsh}
We say that a hash family $\mathcal{F}$ is locality-sensitive with collision probability $k(\cdot,\cdot)$ if for any two points $\mathbf{x}$ and $\mathbf{y}$ in $\mathbb{R}^d$, $l(\mathbf{x}) = l(\mathbf{y})$ with probability $k(\mathbf{x},\mathbf{y})$
under a uniform random selection of $l(\cdot)$ from $\mathcal{F}$. 
\end{definition}

One example of a symmetric LSH family is the signed random projection (SRP) family for the angular distance~\cite{goemans1995improved, charikar2002similarity}. The SRP family is the set of functions $l(\mathbf{x}) = \mathrm{sign}(\mathbf{w}^{\top}\mathbf{x})$, where $\mathbf{w}\sim\mathcal{N}(0,I_d)$. The SRP collision probability is 

$$k(\mathbf{x},\mathbf{y}) = 1-\frac{1}{\pi} \cos ^{-1}\left(\frac{\langle \mathbf{x},\mathbf{y} \rangle}{\|\mathbf{x}\|_{2}\|\mathbf{y}\|_{2}}\right)$$

The inner product hash~\cite{shrivastava2014mips} is an example of an asymmetric LSH. Suppose we replace $\mathbf{x}$ with $\left[\mathbf{x}, 0, \sqrt{1- \|\mathbf{x}\|_2^2}\right]$ and $\mathbf{y}$ with $\left[\mathbf{y}, \sqrt{1- \|\mathbf{y}\|_2^2}, 0 \right]$ in the SRP function. This procedure essentially uses different hash functions for $\mathbf{x}$ and $\mathbf{y}$, but the collision probability $k(\mathbf{x},\mathbf{y})$ is now a monotone function of the inner product $\langle\mathbf{x},\mathbf{y}\rangle$. In particular, $k(\mathbf{x},\mathbf{y})$ is the same as for SRP but without the normalization by $\|\mathbf{x}\|$ and $\|\mathbf{y}\|$. Here, we implicitly assume that $\mathbf{x}$ and $\mathbf{y}$ are inside the unit sphere, so we often scale the dataset when using this inner product hash in practice.

\paragraph{Collision Probabilities and Sketching:} The collision probability $k(x,y)$ is a positive function of $x$ and $y$. For most LSH functions, $k(x,y)$ is a function of the distance $d(x,y)$ and is a positive definite kernel function~\cite{coleman2020race}. For asymmetric LSH functions, $k(x,y)$ can take on nontrivial shapes because $k(x,y)\neq k(y,x)$.
The RACE streaming algorithm~\cite{luo2018ace,coleman2020race} provides an efficient sketch to estimate the sum 
$$ L(\theta) = \frac{1}{n}\sum_{i = 1}^{n} k(\theta,x_i)$$
when $k(\cdot,\cdot)$ forms the collision probability of an LSH function. The sketch consists of a sparse 2D array of integers that are indexed with an LSH function. RACE sketches have error guarantees that relate the quality of the approximation to the memory and computation requirements. The error depends mostly on the value of $L(\theta)$ and does not directly depend on $N$. 

The RACE sketch can also be released with differential privacy without substantially increasing the error bound~\cite{coleman2020private}. There are two directions toward a private sketch: private LSH functions and private RACE sketches. Private LSH functions can be constructed by adding Gaussian noise to the hash function value for projection-based hash functions~\cite{Kenthapadi_Korolova_Mironov_Mishra_2013}. This strategy preserves $(\epsilon,\delta)$-differential privacy for the data attributes of each example $x$ in the dataset. Private RACE sketches can be constructed by adding Laplace noise to the count values for each cell in the sketch. This strategy preserves $\epsilon$-differential privacy at the example-level granularity. 

In practice, RACE can estimate $L(\theta)$ within 1\% error in a 4 MB sketch~\cite{coleman2020race} when $k(\theta,x)$ is a positive definite kernel. To construct a RACE sketch, we create an empty integer array with $R$ rows and $B$ columns. We construct $R$ LSH functions $\{l_1(x)...l_R(x)\}$ with collision probability $k(\theta,x)$. We increment column $l_r(x)$ in row $r$ when an element $x$ arrives from the stream. We estimate the loss $L(\theta)$ by returning the average count value at the $r$ indices $[r,l_r(\theta)]$.

\section{STORM Sketches for Estimating Surrogate Losses}
While collision probabilities have been well-studied in the context of improving near-neighbor search~\cite{gionis1999similarity}, we study them from a new perspective. Our goal is to compose useful loss functions from LSH collision probabilities and perform ERM on sketches. So far, RACE sketches have enabled new applications in metagenomics~\cite{coleman2019diversified} and compressed near neighbor search~\cite{coleman2020neighbor} because they can efficiently represent the kernel density estimate. RACE sketches reduce the computation and memory footprint because they replace complicated sampling and indexing procedures with a simple sketch~\cite{coleman2020race}. 

We propose RACE-style sketches to approximate the empirical risk for distributed ERM problems. In this context, we query the sketch with the parameter $\theta$ to estimate the empirical risk. This requires some extensions to the RACE sketch. First, we use asymmetric LSH functions to hash data $x$ and $\theta$ with different functions. We also propose methods that apply several LSH functions to $x$ and increment more than one index in each row, adding together collision probability functions. For instance, we propose Paired Random Projections (PRP) for an LSH surrogate linear regression loss. PRP inserts elements into the sketch with two signed random projection (SRP) hashes but queries $\theta$ using only one SRP. Our algorithm to generate Sketches Toward Online Risk Minimization (STORM) is presented in Figure~\ref{fig:stormfig} and Algorithm~\ref{alg:STORMsketch}.

\label{STORMalgo}

\begin{figure}[ht]
\begin{minipage}{0.4\textwidth}
    \centering
    \includegraphics[width=2.7in]{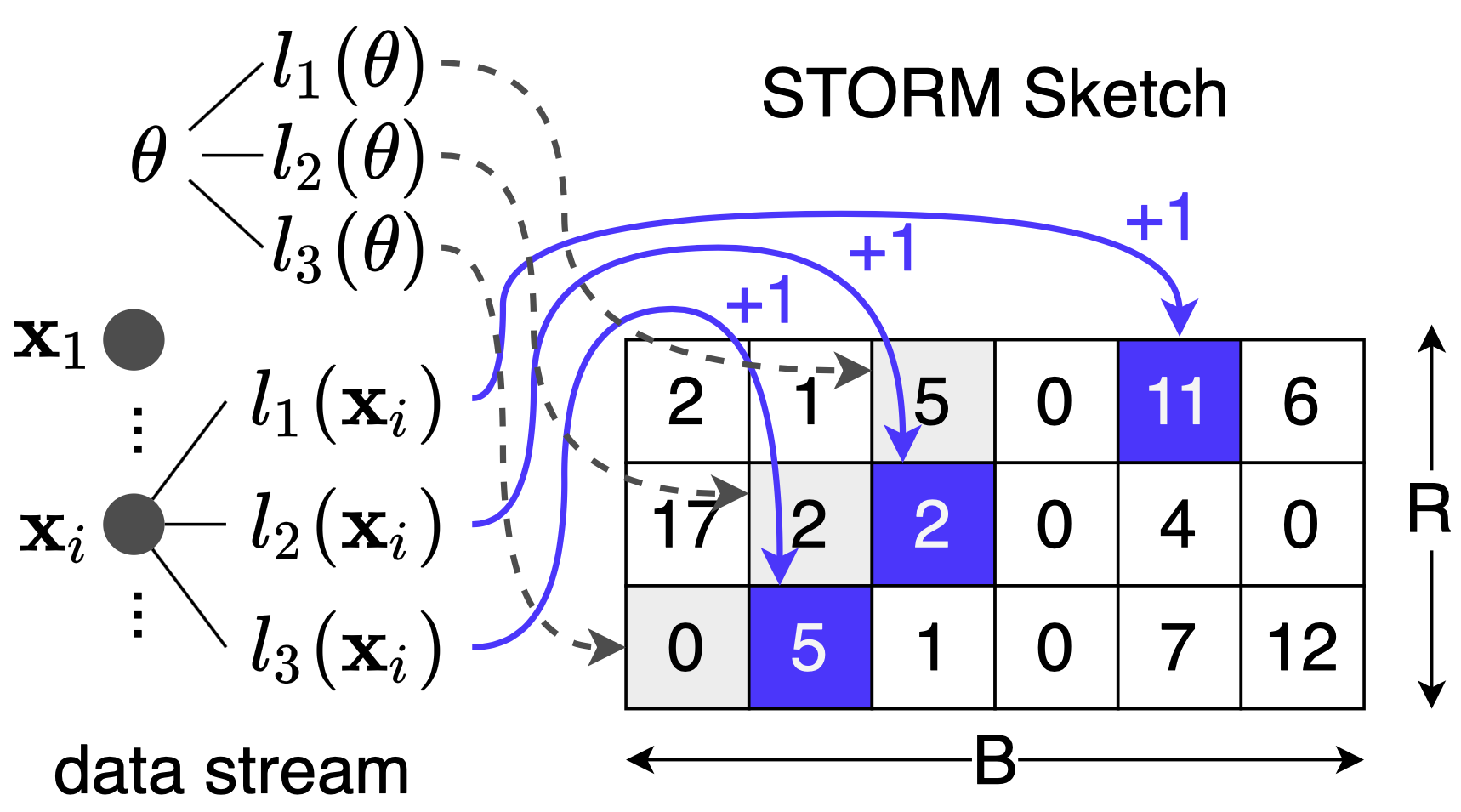}
\end{minipage}
\hfill
\begin{minipage}{0.45\textwidth}
\centering
\begin{algorithm}[H]
\begin{algorithmic}
\STATE {\bf Input:} Dataset $\mathcal{D}$
\STATE {\bf Result:} STORM sketch $\mathcal{S} \in \mathbb{Z}^{R \times B}$
\FOR{$x \in \mathcal{D}$}
\FOR{$r = 1 ... R$}
    \STATE Increment $\mathcal{S}_{r}[ l_r(x)]$
\ENDFOR
\ENDFOR
 \end{algorithmic}
  \caption{STORM Sketch}
\label{alg:STORMsketch}
\end{algorithm}
\end{minipage}
\caption{To construct a STORM sketch, we initialize $R$ arrays with $B$ buckets (or columns)
. We initialise $R$ arrays of size $d$ each. $h_i(.)$ is the $i_{th}$ LSH function.
STORM sketching process. $h_i(.)$ is PRP. Right: shows the algorithm for sketch creation. }
\label{fig:stormfig}
\end{figure}

\paragraph{Intuition:} To understand why STORM sketches contain enough information to perform ERM, consider the task presented in Figure~\ref{fig:intuition}. By partitioning the space into $B$ random regions, we can find the approximate location of the input data by examining the overlap of densely-populated partitions. We can then identify a good regression model as one that mostly passes through dense regions. We can also solve classification problems by favoring hyperplanes that separate dense regions with different classes. STORM sketches extend this idea by optimizing the model $\theta$ over the counts to find a $\theta$ that collides with many data points. The main challenge is to design an LSH function $l(\theta)$ that has a large (or small) count when $\theta$ is a good model.

\begin{figure*}[t]
\centering
\includegraphics[width=\textwidth,keepaspectratio]{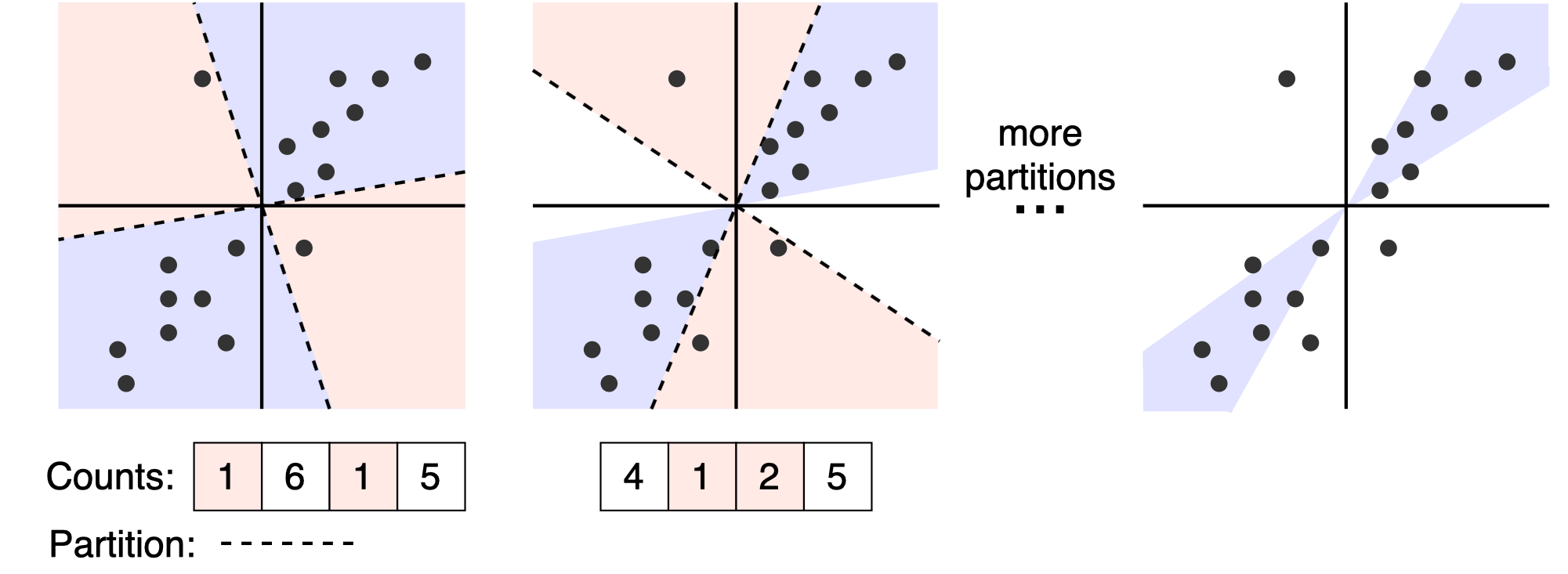}
\vspace{-0.5cm}
\caption{LSH counts are sufficient for learning algorithms. For linear regression, we can narrow down the set of eligible regression lines by eliminating those that live in sparsely-populated partitions (shown in red). After a few random partitions, we are left with a good idea of the optimal solution. We gain the most information when the partition boundaries are orthogonal, which is nearly always true in high dimensions.}
\label{fig:intuition}
\vspace{-0.2cm}
\end{figure*}

\paragraph{Optimization:} Once we have a sketch and an appropriate hash function, we want to optimize the model parameter $\theta$ to minimize the STORM estimate of the empirical risk. The standard ERM technique is to apply gradient descent to find the minimum. Due to the count-based nature of the sketch, we cannot analytically find the gradient. We will resort to derivative-free optimization techniques~\cite{conn2009introduction}, where black-box access to the loss function of interest (or its sharp approximation) is sufficient. Since our focus is not on derivative-free optimization, we employ a simple optimization algorithm that queries the sketch at random points in a sphere around $\theta$. Using only a few ($\sim$10) cheap loss evaluations, we approximate the gradient and update $\theta$.

For some STORM sketches, we obtain improvements over standard derivative-free methods with linear optimization. Such methods attempt to place $\theta$ into the optimal set of hash partitions. Linear optimization is possible when the hash function is a projection-based LSH in $\mathbb{R}^d$.

\section{Theory: Sketchable Surrogate Loss with Same Minima}
\label{STORMtheory}

We begin with a formal discussion of the families of losses that STORM can approximate. Using the compositional properties of LSH, we can show that STORM can provide an unbiased estimator for a large class of functions. For instance, we can insert elements to the STORM sketch (or use a second sketch) to add collision probabilities. We can also concatenate LSH functions to estimate the product of collision probabilities. Theorem~\ref{thm:stormLoss} describes the set of functions that STORM can approximate. 
\begin{theorem}
\label{thm:stormLoss}
The set of STORM-approximate functions $S_L$ contains all LSH collision probabilities and is closed under addition, subtraction, and multiplication.
\end{theorem}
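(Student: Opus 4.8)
The plan is to prove the three closure properties by exhibiting, in each case, an explicit STORM construction whose expected bucket counts realize the desired function, relying throughout on linearity of expectation and on the independence of the LSH functions placed in each row. I interpret a function $f(\theta,x)$ as \emph{STORM-approximate} if there is a sketching scheme (possibly with several insertions per row and possibly combining several sketches) whose estimate is unbiased for $\sum_i f(\theta,x_i)$, i.e.\ for $n$ times the empirical risk $L(\theta)$; I will build such a scheme for each operation. To make ``closed under'' precise I would argue by structural induction, maintaining the invariant that every STORM-approximate function is a finite signed sum of products of collision probabilities, and showing that each operation preserves this normal form.

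For the base case, let $\mathcal{F}$ be an LSH family with collision probability $k(\cdot,\cdot)$. The construction of Algorithm~\ref{alg:STORMsketch} increments cell $l_r(x)$ in row $r$ for each datapoint and reads the count at $l_r(\theta)$. That count equals $\sum_i \mathbf{1}[l_r(x_i) = l_r(\theta)]$, and taking expectation over the random draw of $l_r$ from $\mathcal{F}$ gives $\sum_i k(\theta,x_i)$ by Definition~\ref{def:lsh}; averaging over the $R$ rows preserves unbiasedness. Hence every collision probability lies in $S_L$.

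For addition and subtraction I would invoke linearity of expectation. If $f$ and $g$ admit STORM schemes with estimates $\hat f$ and $\hat g$, then running both schemes in parallel and returning $\hat f + \hat g$ (resp.\ $\hat f - \hat g$) is unbiased for $\sum_i (f\pm g)(\theta,x_i)$. Equivalently, one can fold both into a single sketch by performing both sets of per-row insertions and combining the read-off counts with the appropriate signs; the integer counters permit the signed combination needed for subtraction, and the resulting function need not itself be a valid collision probability.

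Multiplication is the crux and the step I expect to be the main obstacle, because the naive product of the two estimates equals $\big(\sum_i f\big)\big(\sum_j g\big) = \sum_{i,j} f(\theta,x_i)\,g(\theta,x_j)$ and therefore contains spurious cross terms $i\neq j$, whereas the target is the diagonal sum $\sum_i f(\theta,x_i)\,g(\theta,x_i)$. The resolution is to perform the multiplication \emph{at insertion time} by hash concatenation rather than at query time: for independent hashes $l^{(1)}\sim\mathcal{F}_1$ and $l^{(2)}\sim\mathcal{F}_2$, a datapoint collides with $\theta$ under the concatenated hash $(l^{(1)},l^{(2)})$ exactly when it collides under both, so its contribution is $\mathbf{1}[l^{(1)}(x_i)=l^{(1)}(\theta)]\,\mathbf{1}[l^{(2)}(x_i)=l^{(2)}(\theta)]$, whose expectation factors as $k_1(\theta,x_i)\,k_2(\theta,x_i)$ by independence. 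Summing over $i$ yields $\sum_i k_1 k_2$ with no cross terms, so any product of collision probabilities is STORM-approximate (and squaring a single collision probability is handled by drawing two independent copies of its hash). To finish for general $f,g\in S_L$, I would use the normal form: each is a finite signed sum of products of collision probabilities, and distributing $fg$ over these sums rewrites it as a signed sum of products of collision probabilities, each realizable by one concatenated-hash term. Closure under addition then places $fg$ in $S_L$, completing the induction.
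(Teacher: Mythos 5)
Your proposal is correct and follows essentially the same route as the paper's proof: the base case via unbiasedness of the bucket count, addition/subtraction via linearity across parallel sketches, multiplication via concatenating independently drawn hashes so the collision probability factorizes into $k_1(\theta,x_i)\,k_2(\theta,x_i)$, and the signed-sum-of-products normal form to extend closure to all of $S_L$. The only cosmetic difference is that the paper realizes the concatenated hash as a single integer via an injective map $\pi(a,b)=p_1^a p_2^b$ with coprime $p_1,p_2$, whereas you keep the pair explicitly; your remarks on the spurious cross terms from query-time multiplication and on squaring via two independent hash copies make explicit points the paper leaves implicit.
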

Theorem~\ref{thm:stormLoss} says that we can approximate any sum and/or product of LSH collision probabilities using one or more STORM sketches. Given the number of known LSH families, the flexibility of asymmetric LSH, and the closure of $S_L$ under addition and multiplication, Theorem~\ref{thm:stormLoss} suggests that $S_L$ is an expressive space of functions. In particular, we show that $S_L$ contains useful losses for machine learning problems. 

\subsection{Constructing STORM Approximable Surrogate Loss for Linear Regression}
Since we query the STORM sketch with $\theta$, we must be able to express $\mathrm{loss}(h_{\theta}(x),y)$ as $\mathrm{loss}(\theta,[x,y])$. This condition is not terribly limiting, but it does restrict our attention to hypothesis classes where $\theta$ directly interacts with $x$ and $y$ in a way that can be captured by an LSH function. Linear regression is the simplest example of such a model.

\paragraph{Designing the Loss:}In the non-regularized linear regression objective, $\theta$ interacts with the data via the inner product $\langle [\theta,-1],[\mathbf{x},y]\rangle$. The linear regression loss function is a monotone function of the absolute value of this inner product. The asymmetric hash function discussed in Section~\ref{sec:lsh} seems promising, as it depends on the inner product between $\theta$ and $\mathbf{x}$. However, its collision probability is monotone in the inner product, not the absolute value. To obtain a surrogate loss with the correct dependence on $\langle [\theta,-1],[\mathbf{x},y]\rangle$, observe that the collision probability is monotone decreasing if we hash $-\mathbf{x}$ instead of $\mathbf{x}$. Thus, we can obtain a function that is monotone in $|\langle [\theta,-1],[\mathbf{x},y]\rangle|$ by adding together the collision probability for $\mathbf{x}$ (which is monotone increasing) and for $-\mathbf{x}$ (which is monotone decreasing). We refer to our construction as paired random projections (PRP) because the method can be implemented by hashing $[\mathbf{x},y]$ and $-[\mathbf{x},y]$ with the same SRP function. Rather than update a single location in the sketch, we update the pair of random projection locations. When we query with the vector $\tilde{\theta} = [\theta,-1]$, STORM estimates the following surrogate loss for linear regression. Here, $p$ is an integer power of 2 which determines the number of random projections used for the PRP hash function. 

$$ g(\tilde{\theta},[\mathbf{x},y]) = \frac{1}{2}\left(1 - \frac{1}{\pi}\cos^{-1}({\langle \tilde{\theta},[\mathbf{x},y]\rangle})\right)^{p} + \frac{1}{2}\left(1 - \frac{1}{\pi}\cos^{-1}(- {\langle \tilde{\theta},[\mathbf{x},y]\rangle})\right)^{p} $$

\begin{theorem} When $p\geq 2$, the PRP collision probability $g([\theta,-1],[\mathbf{x},y])$is a convex surrogate loss for the linear regression objective such that 
$$ \operatorname*{arg\,min}_\theta \sum_{\mathbf{x},y\in\mathcal{D}} g([\theta,-1],[\mathbf{x},y]) =  \operatorname*{arg\,min}_\theta \|\mathbf{y} - \mathbf{X}\theta\| $$

\end{theorem}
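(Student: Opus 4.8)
The plan is to reduce everything to a one–dimensional analysis of the residual and then lift the conclusions back to $\theta$. First I would introduce the scalar $z = \langle[\theta,-1],[\mathbf{x},y]\rangle = \langle\theta,\mathbf{x}\rangle - y$, which is exactly the signed regression residual and is \emph{affine} in $\theta$. Writing $\phi(t) = 1 - \tfrac{1}{\pi}\cos^{-1}(t)$ on $[-1,1]$, the elementary identity $\cos^{-1}(-t) = \pi - \cos^{-1}(t)$ yields the reflection relation $\phi(-t) = 1 - \phi(t)$. Setting $u = \phi(z)\in[0,1]$, the PRP loss collapses to $g = \psi(u) := \tfrac12 u^{p} + \tfrac12(1-u)^{p}$, a function that is even in $z$, equal to $2^{-p}$ at $z=0$, and increasing in $|z|$. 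This substitution is the organizing device for both halves of the claim.

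Second, I would establish convexity. Since $z$ is affine in $\theta$, it suffices to prove $g$ is convex in $z$ on $(-1,1)$; composition with an affine map then delivers convexity in $\theta$. By the chain rule, $g'' = \psi''(u)\,(\phi')^{2} + \psi'(u)\,\phi''$. The first term is nonnegative because $\psi''(u) = \tfrac{p(p-1)}{2}\bigl[u^{p-2} + (1-u)^{p-2}\bigr] \ge 0$ for $p \ge 2$. For the second term I would note that $\psi'(u) = \tfrac{p}{2}\bigl[u^{p-1} - (1-u)^{p-1}\bigr]$ carries the sign of $u - \tfrac12$, hence of $z$ (as $\phi$ is increasing with $\phi(0)=\tfrac12$), while $\phi''(t) = \tfrac{t}{\pi(1-t^{2})^{3/2}}$ also carries the sign of $z$; their product is therefore $\ge 0$. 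Thus $g'' \ge 0$, and the surrogate is convex.

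Third, for the minimizer I would compare first-order conditions. Least-squares stationarity reads $\sum_i z_i\,\mathbf{x}_i = 0$, whereas the surrogate gives $\sum_i g'(z_i)\,\mathbf{x}_i = 0$. Because both $g$ and $z^{2}$ are even, strictly increasing in $|z|$, and uniquely minimized at $z = 0$, the two objectives coincide trivially in the realizable case where some $\theta$ annihilates every residual. For the general statement I would argue consistency: at the population level, for each fixed $\mathbf{x}$ the minimizer of $\mathbb{E}[g(z)\mid\mathbf{x}]$ over the prediction is the center of symmetry of the conditional residual law, which under symmetric noise in the well-specified linear model equals the conditional mean $\mathbb{E}[y\mid\mathbf{x}]$ targeted by least squares; this identifies the two solutions.

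I expect the minimizer equivalence to be the main obstacle, and it is the step that must be handled with care. Convexity is exact, but distinct even convex losses are generically distinct $M$-estimators: $\sum_i g'(z_i)\mathbf{x}_i = 0$ and $\sum_i z_i\mathbf{x}_i = 0$ need not share a finite-sample solution unless $g'$ is linear in $z$. Since $g$ is even, $g'$ is odd, and a Taylor expansion gives $g'(z) = c_p\, z + O(z^{3})$ with $c_p = \tfrac{p(p-1)}{\pi^{2}}2^{2-p} > 0$, so the surrogate matches least squares to leading order in the small-residual regime that the scaling of the data onto the unit sphere is designed to enforce. The honest version of the equality therefore either restricts to this small-residual (or realizable) regime or reads as a population/Fisher-consistency claim under symmetric noise; pinning down the precise hypotheses that make the two $\arg\min$ sets identical is the delicate part of the proof.
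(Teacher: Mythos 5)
Your proposal is correct where the paper's proof is correct, and it reaches convexity by a genuinely different and in fact sounder route. The paper works directly in the vector variable $a=[\theta,-1]$: it computes $\nabla_a \mathcal{L} \propto \left(f(a,b)^{p-1}-f(a,-b)^{p-1}\right)b$ with $f(a,b)=1-\frac{1}{\pi}\cos^{-1}(\langle a,b\rangle)$, notes the gradient vanishes exactly at zero residual when $p\geq 2$, and then writes the Hessian as $b^{\top}b\,\rho(a,b)$ with the bare assertion $\rho>0$. Your scalar reduction --- $z=\langle\theta,\mathbf{x}\rangle-y$, the reflection identity $\phi(-t)=1-\phi(t)$, the collapse to $\psi(u)=\frac{1}{2}u^{p}+\frac{1}{2}(1-u)^{p}$, and the split $g''=\psi''(u)(\phi')^{2}+\psi'(u)\phi''$ with a sign check on each term --- buys both transparency and a repair: as printed, the paper's $\rho$ omits the factor $\langle a,b\rangle$ produced by differentiating $(1-\langle a,b\rangle^{2})^{-1/2}$ (and duplicates $f(a,b)^{p-2}$ where $f(a,-b)^{p-2}$ is meant), so its second term is negative whenever $\langle a,b\rangle<0$ and the claim $\rho>0$ fails for the displayed expression. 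Your pairing of $\psi'(u)$, which carries the sign of $z$, with $\phi''$, which also carries the sign of $z$, is precisely the corrected version of that computation; convexity in $\theta$ then follows by affine composition, which the paper instead handles by restricting a convex function of $a$ to the slice $a_{d+1}=-1$.

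On the minimizer equivalence, your skepticism is well placed and matches a genuine weakness in the published argument rather than a defect in yours. The paper proves only what you prove: each per-example surrogate term is stationary, and minimized, exactly at zero residual, the same point minimizing each per-example squared loss; it then asserts equality of the argmins of the \emph{sums}. That inference is valid only in the realizable case where some $\theta$ annihilates every residual --- otherwise the first-order conditions $\sum_i g'(z_i)\mathbf{x}_i=0$ and $\sum_i z_i\mathbf{x}_i=0$ need not share a solution, since $g'$ is odd but nonlinear, exactly as you say. Your leading-order expansion $g'(z)=c_p z+O(z^{3})$ with $c_p=\frac{p(p-1)}{\pi^{2}}2^{2-p}$ (which checks out: $g''(0)=\psi''(\tfrac12)\phi'(0)^{2}$), and your population-level symmetric-noise consistency reading, are reasonable ways to make the statement honest; the paper supplies neither hypothesis. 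So treat the finite-sample argmin identity as holding in the realizable or small-residual regime, and note that your proof, unlike the paper's, makes this restriction explicit.
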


The integer $p$ is a parameter that determines the number of hyperplanes used by the PRP hash functions. The $p$ hyperplanes split the data space into $2^p$ partitions. Intuition suggests that $p$ should be large, because we can obtain more information when we have many partitions. However, one can observe from Figure~\ref{fig:lossvsL2}(a) that the surrogate loss landscape becomes very flat near the optimum for large $p$, making the function hard to optimize.  We find that $p = 4$ results in the most strongly convex function in a localized region around the optimum. This can be visualized by examining the gradient, for example, at $\langle \theta, y[\mathbf{x},-1] \rangle = 0.1$. Near the optimal regression line, the steepness of the loss basin varies with $p$ as shown in the figure \ref{fig:lossvsL2} (b). This is of practical importance because fewer noisy estimates of $g$ are necessary to optimize a strongly convex function with derivative-free optimization methods.

\begin{figure}
\begin{subfigure}{.45\textwidth}
    \centering
    \includegraphics[scale =0.4]{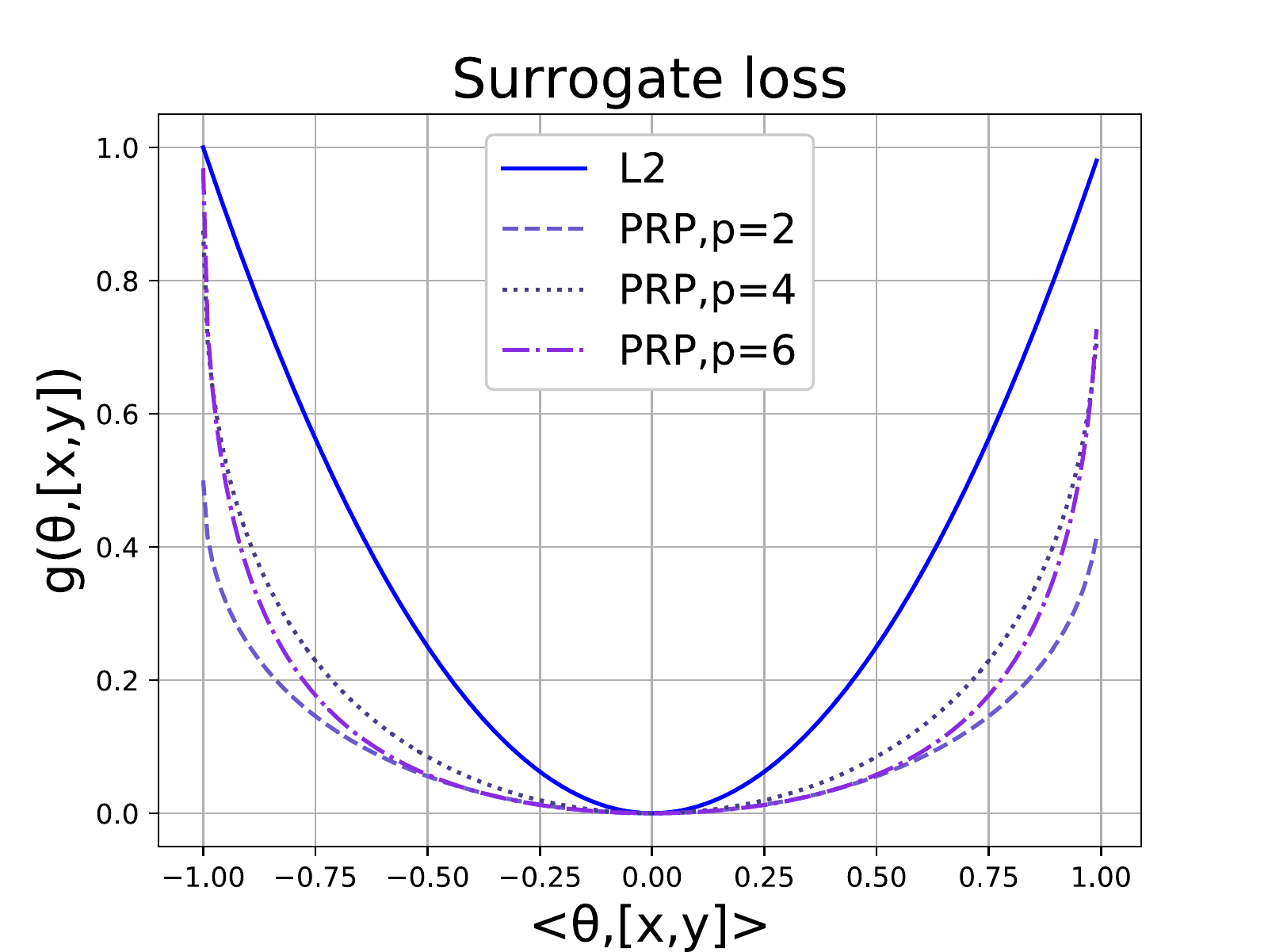}
\end{subfigure}
\hfill
\begin{subfigure}{.45\textwidth}
    \centering
    \includegraphics[scale =0.4]{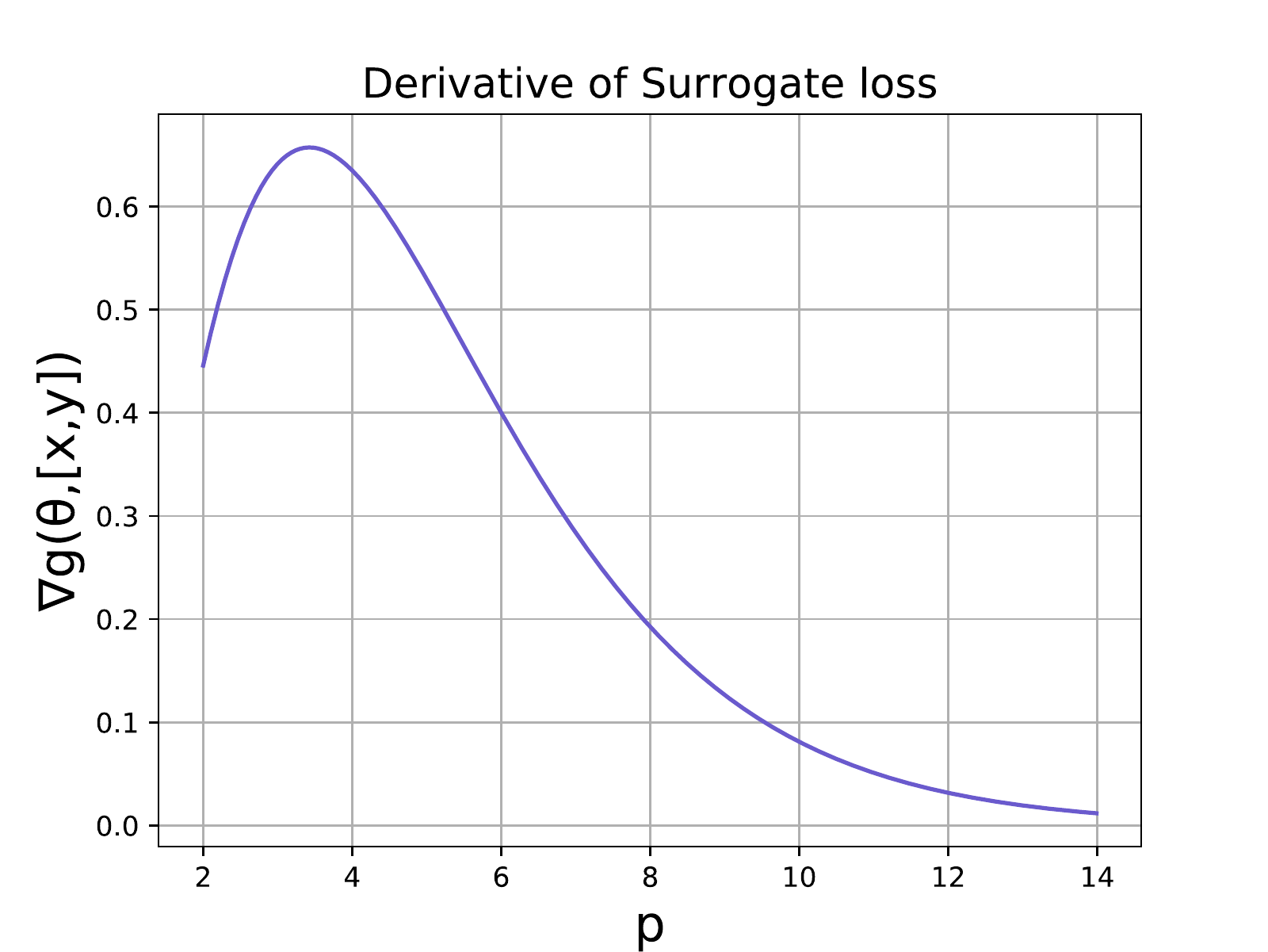}
\end{subfigure}
\caption{(a) Plot of surrogate loss for different values of $p$. The surrogate loss is convex, but less sharp than the L2 loss. A key observation is that $p=4$ produces the highest convexity among the family of PRP surrogate losses. (b) Plot of slope surrogate loss for different values of $p$ at $\langle \theta, y[\mathbf{x},-1] \rangle = 0.1$. The surrogate loss has the steepest slope (and is thus easiest to optimize) near $p = 4$.}
\label{fig:lossvsL2}
\end{figure}

Once the STORM sketch with $R$ repetitions and $d =2^p$ bins is created, we proceed with optimization. We perform derivative-free optimization as discussed before, with an additional constraint. We initialize $\tilde{\theta}$ to zeros in $d+1$ dimensions. The additional dimension is due to querying the sketch with $[\theta,-1]$ rather than $[\theta]$. We compute the approximate gradient by querying the sketch with equidistant points in a ball around $\tilde{\theta}$ and update the parameter. After each iteration, we project the last dimension of $\tilde{\theta}$ back onto the constraint $\tilde{\theta}_{d+1} = -1$. Please refer to Algorithm \ref{alg:gradientFree} for details.

\begin{algorithm}[t]
\begin{algorithmic}
\STATE {\bf Input:} STORM sketch $\mathcal{S} \in \mathbb{Z}^{R\times B}$ constructed with PRP, number of queries $k$ to approximate the gradient, step size $\eta$ and sphere size $\sigma$)
\STATE {\bf Result:} Linear regression parameter $\theta$
\STATE {\bf Result:} Initialise $\tilde{\theta}_0 \leftarrow \mathbf{0}^{d+1}$
\FOR{number of iterations $n = 1, ... N$ }
    \STATE $\{v_1, v_2, .. v_k\}$ = random vectors on the $\sigma$-sphere centered at $\tilde{\theta}$
    \STATE $\{\delta_1, \delta_2, .. \delta_k\} = \{\mathrm{Query}_{\mathcal{S}}(v_1), ... \mathrm{Query}_{\mathcal{S}}(v_k)\}$
    \STATE Approximate gradient $\hat{\nabla}(\tilde{\theta}) = \mathrm{mean}(\delta_1, \delta_2, .. \delta_k)$
    \STATE Update $\tilde{\theta}_{n} = \tilde{\theta}_{n-1} - \eta(g(\tilde{\theta}))$
    \STATE Project last dimension of $\tilde{\theta}$ onto the constraint $-1$
\ENDFOR 

 \end{algorithmic}
  \caption{Derivative-free linear regression with STORM}
  \label{alg:gradientFree}
\end{algorithm}

\subsection{Constructing STORM Approximable Surrogate Loss for Max-Margin Linear Classification}

Analogous to linear regression, we can solve other risk minimization problems using STORM. Consider linear hyperplane classifiers of the form $h_{\theta}(x) = \mathrm{sign}(\langle \theta, [\mathbf{x},-1]\rangle)$. Most popular losses used to find $\theta$ are classification-calibrated margin losses. A loss function is classification-calibrated, or Bayes consistent, if the optimal hypothesis under the loss is the same as the Bayes optimal hypothesis~\cite{bartlett2006convexity}. We propose the following classification-calibrated loss function. 
\begin{theorem}
The loss function $g(\theta,[\mathbf{x},y])$ for the linear hyperplane classifier is a classification-calibrated margin loss and is STORM-approximable.
$$ g(\theta,[\mathbf{x},y]) = 2^p\left(1 - \frac{1}{\pi} \cos^{-1}(-y\langle \theta, \mathbf{x} \rangle) \right)^{p}$$
\end{theorem}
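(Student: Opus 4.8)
The plan is to separate the statement into its two assertions---that $g$ is a classification-calibrated margin loss, and that $g$ is STORM-approximable---after first rewriting $g$ in a cleaner form. First I would use the identity $\cos^{-1}(-v) = \pi - \cos^{-1}(v)$ to collapse the definition: writing $u = y\langle\theta,\mathbf{x}\rangle$ for the margin,
\[
g(\theta,[\mathbf{x},y]) = 2^p\Big(1 - \tfrac{1}{\pi}\cos^{-1}(-u)\Big)^p = \Big(\tfrac{2}{\pi}\cos^{-1}(u)\Big)^p =: \phi(u).
\]
Because $g$ depends on the data only through the margin $u = y\langle\theta,\mathbf{x}\rangle$, it is by definition a margin loss, and $\phi$ is decreasing in $u$ with $\phi(1)=0$, $\phi(0)=1$, and $\phi(-1)=2^p$, as a loss should be. (I take $p\ge 2$, consistent with the PRP construction, and recall that $|u|\le 1$ under the unit-sphere assumption of Section~\ref{sec:lsh}.)

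For STORM-approximability I would appeal directly to Theorem~\ref{thm:stormLoss}. The base term $\kappa := 1 - \frac{1}{\pi}\cos^{-1}(-u) = 1 - \frac{1}{\pi}\cos^{-1}(\langle\theta, -y\mathbf{x}\rangle)$ is exactly the collision probability of the asymmetric inner-product (SRP) hash of Section~\ref{sec:lsh}, applied to the label-flipped example $-y\mathbf{x}$ and queried with $\theta$; hence $\kappa\in S_L$. Since $S_L$ is closed under multiplication, the $p$-fold product $\kappa^p$ (realized by concatenating $p$ such hashes) lies in $S_L$; and since $S_L$ is closed under addition, the positive-integer scaling $2^p\kappa^p = \kappa^p + \cdots + \kappa^p$ also lies in $S_L$. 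Therefore $g = 2^p\kappa^p \in S_L$, which establishes STORM-approximability.

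The remaining and more substantial task is classification calibration. I would invoke the convex-loss characterization of Bartlett et al.~\cite{bartlett2006convexity}: a convex margin loss is classification-calibrated if and only if it is differentiable at $0$ with negative derivative there. Differentiability at $0$ is immediate since $\cos^{-1}$ is smooth on $(-1,1)$, and writing $a(u) = \frac{2}{\pi}\cos^{-1}(u)$ a direct computation gives $\phi'(0) = -\frac{2p}{\pi} < 0$. The crux is convexity: from $\phi'' = p\,a^{p-2}\big[(p-1)(a')^2 + a\,a''\big]$, and after substituting $u = \cos\vartheta$, the bracket is nonnegative exactly when $p-1 \ge \vartheta\cot\vartheta$.

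I expect this last inequality to be the main obstacle, and it is where the restriction $p\ge 2$ enters. The quantity $\vartheta\cot\vartheta$ is negative for $u<0$ (i.e.\ $\vartheta\in(\pi/2,\pi)$), and on $(0,\pi/2)$ it is strictly decreasing---its derivative has numerator $\tfrac{1}{2}\sin 2\vartheta - \vartheta \le 0$---so its supremum is $\lim_{\vartheta\to 0^+}\vartheta\cot\vartheta = 1$. Thus $p-1\ge 1$, i.e.\ $p\ge 2$, guarantees $\phi''\ge 0$ throughout $(-1,1)$, and the Bartlett criterion then yields calibration. As a cross-check, and to cover the claim without relying on convexity, I would if needed minimize the conditional risk $C_\eta(s) = \eta s^p + (1-\eta)(2-s)^p$ with $s = \frac{2}{\pi}\cos^{-1}(\alpha)\in[0,2]$ directly: the stationarity condition $\big(\tfrac{s}{2-s}\big)^{p-1} = \tfrac{1-\eta}{\eta}$ forces $s<1$, hence prediction $\alpha>0$, precisely when $\eta>1/2$, which is the sign-consistency defining classification calibration.
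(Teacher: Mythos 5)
Your proposal is correct, and its skeleton matches the paper's own proof: both verify calibration through the criterion of Bartlett et al.~\cite{bartlett2006convexity} (convexity plus a negative derivative at the origin), and both realize the loss via the asymmetric inner-product LSH applied to the label-flipped point $-y\mathbf{x}$. The difference is that you prove what the paper asserts. The paper claims convexity of $\phi(t)=2^p\left(1-\frac{1}{\pi}\cos^{-1}(-t)\right)^p$ ``for the same reasons discussed in the proof of Theorem~\ref{thm:regression},'' but that proof computes the Hessian of the \emph{paired} loss $\frac{1}{2}f(\mathbf{a},\mathbf{b})^p+\frac{1}{2}f(\mathbf{a},-\mathbf{b})^p$, and its argument does not transfer verbatim to the single-sided $\phi$; your direct computation---reducing $\phi''\geq 0$ to $p-1\geq \vartheta\cot\vartheta$ after the substitution $u=\cos\vartheta$, then bounding $\sup_{\vartheta\in(0,\pi)}\vartheta\cot\vartheta = 1$ via $\frac{1}{2}\sin 2\vartheta\leq\vartheta$---fills a genuine gap and pinpoints exactly where $p\geq 2$ is needed. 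Your STORM-approximability step is also more complete: the paper exhibits only the base collision probability $1-\frac{1}{\pi}\cos^{-1}(\langle\theta,-y\mathbf{x}\rangle)$ and leaves the power $p$ and the constant $2^p$ implicit, whereas you derive them from the closure properties of Theorem~\ref{thm:stormLoss} (hash concatenation for $\kappa^p$, repeated addition for the integer scaling $2^p$). Two further remarks: your value $\phi'(0)=-2p/\pi$ is the correct one---the paper's stated $-1/\pi$ is an arithmetic slip, since $2^p\, p\, (1/2)^{p-1}\cdot\frac{-1}{\pi} = -2p/\pi$, though this is harmless because only the sign matters for the criterion; and your conditional-risk cross-check via $C_\eta(s)=\eta s^p+(1-\eta)(2-s)^p$ has no counterpart in the paper and is a nice convexity-free verification. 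The only point to tighten there: full calibration requires the strict inequality $\inf_{s\geq 1} C_\eta(s) > \inf_{s\in[0,2]} C_\eta(s)$ for $\eta\neq 1/2$, which follows in one more line from strict convexity of $C_\eta$ on $(0,2)$ together with your observation that the unique stationary point satisfies $s<1$ when $\eta>1/2$.
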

By optimizing $\theta$ in a similar fashion as with regression, we can train linear classifiers with STORM sketches. The LSH function that implements $g(\tilde{\theta},[\mathbf{x},y])$ is the asymmetric inner product hash, but with the argument to the hash function multiplied by $y\in \{-1,+1\}$. See the appendix for details.

\section{Experiments}
\label{experiments}

\begin{table*}[t]\caption{UCI datasets used for linear regression experiments Each dataset has $N$ entries with $d$ features. For all datasets, we run Algorithm~\ref{alg:gradientFree} with $\sigma = 0.5$ and $k = 8$ derivative-free gradient components.}
  \centering
  \begin{tabular}{ l|c|c|l  } 
\toprule
Dataset & $N$ & $d$ & Description \\
\midrule
airfoil & 1.4k & 9 &  Airfoil parameters to predict sound level \\
autos & 159 & 26 &  Automobile prices and information to predict acquisition risk\\
parkinsons & 5.8k & 21 & Telemonitoring data from parkinsons patients, with disease progression \\
\bottomrule
\end{tabular}

\label{tab:datasets}
\vspace{-0.2cm}
\end{table*}

\paragraph{Datasets:}We performed experiments on three UCI datasets, described in Table~\ref{tab:datasets}. We selected datasets with different dimensions, characteristics and sizes. We mainly consider higher-dimensional regression problems ($d\sim 20$), though we provide qualitative results on simulated 2D regression data to provide intuition about the type of regression models found by STORM.

\paragraph{Baselines:} We compare our method against sampling baselines and sketch-based methods i.e. random sampling, leverage score sampling, and the linear algebra sketch proposed by Clarkson and Woodruff~\cite{clarkson2009numerical} for compressed linear regression. We implement all baselines using the smallest standard data type and compare against a range of parameters. 

\paragraph{Experiment setup:}For our regression sketches on STORM, we use PRP with $p =4$ to create the sketch and we vary the number of repetitions. We report results on the training risk, since our objective is to show that the parameters found using STORM are also minimizers of the empirical risk function. We average over 10 runs for our baselines and sketches, where each run has an independently-constructed sketch or random sample. Thus, our average is over the random LSH functions used to construct the sketch and the stochastic derivative-free gradient descent instances.

\paragraph{Results:} In Figure~\ref{fig:regressionPlots}, we report the mean square error for our method when compared with baselines at a variety of memory budgets. We observe a double descent phenomenon for our sampling baselines, explaining the peak near the intrinsic dimensionality of the problem. This sample-wise double descent behavior was recently proved for linear regression by~\cite{nakkiran2019more}. STORM does not experience the double descent curve in practice because the entire dataset (not just a subsample) is used to minimize the loss. We perform favorably against baselines in memory regimes affected by double descent and STORM performs competitively in other memory regimes. We also observe that the $\theta$ found using STORM converges to the optimal $\theta$ under least-squares ERM. This validates our theory that PRP provides a surrogate loss.  


\begin{figure}[H]
    \centering
\begin{subfigure}{0.32\textwidth}
    \includegraphics[width=1.8in]{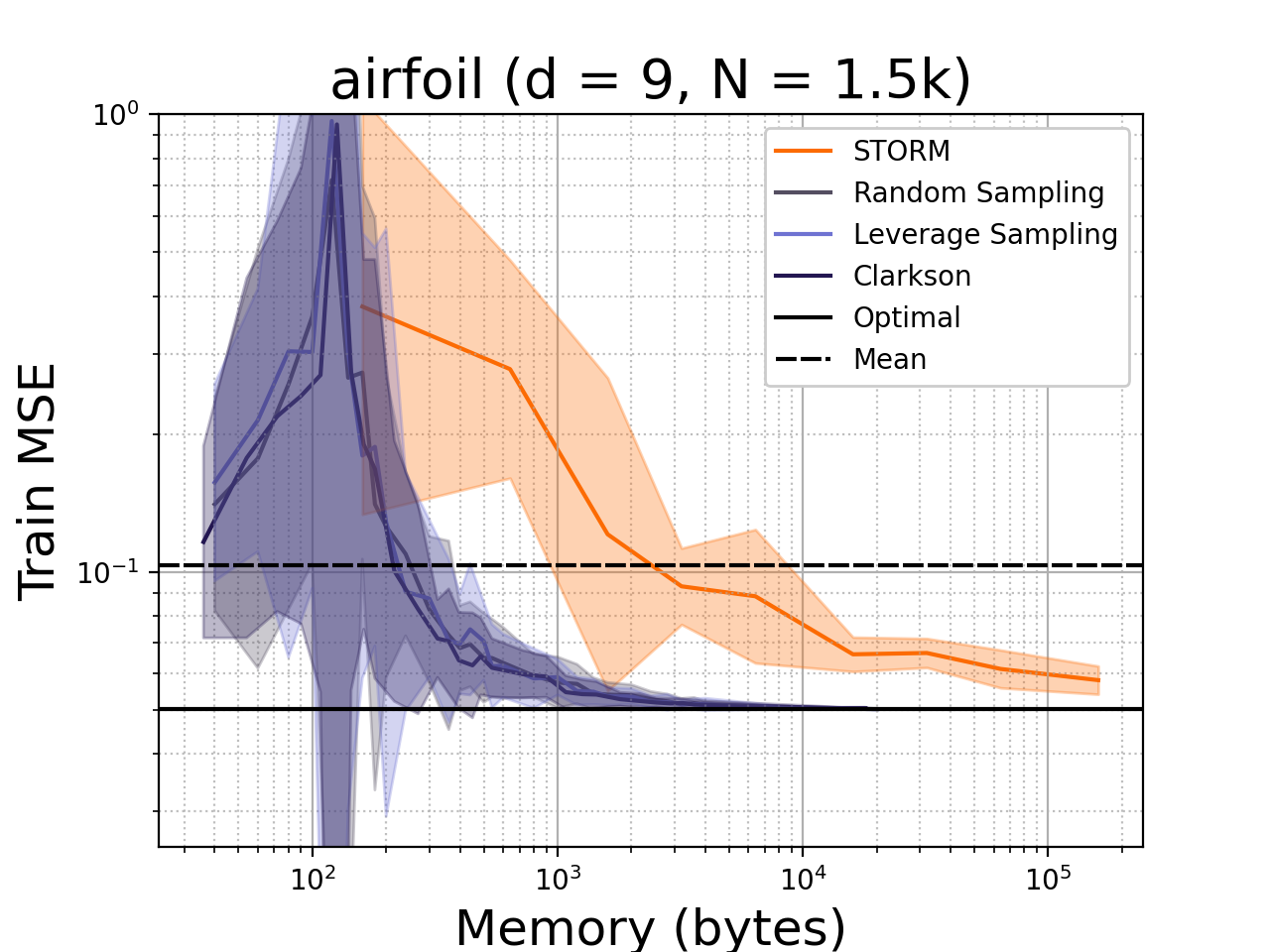}
\end{subfigure}
\begin{subfigure}{0.32\textwidth}
    \includegraphics[width=1.8in]{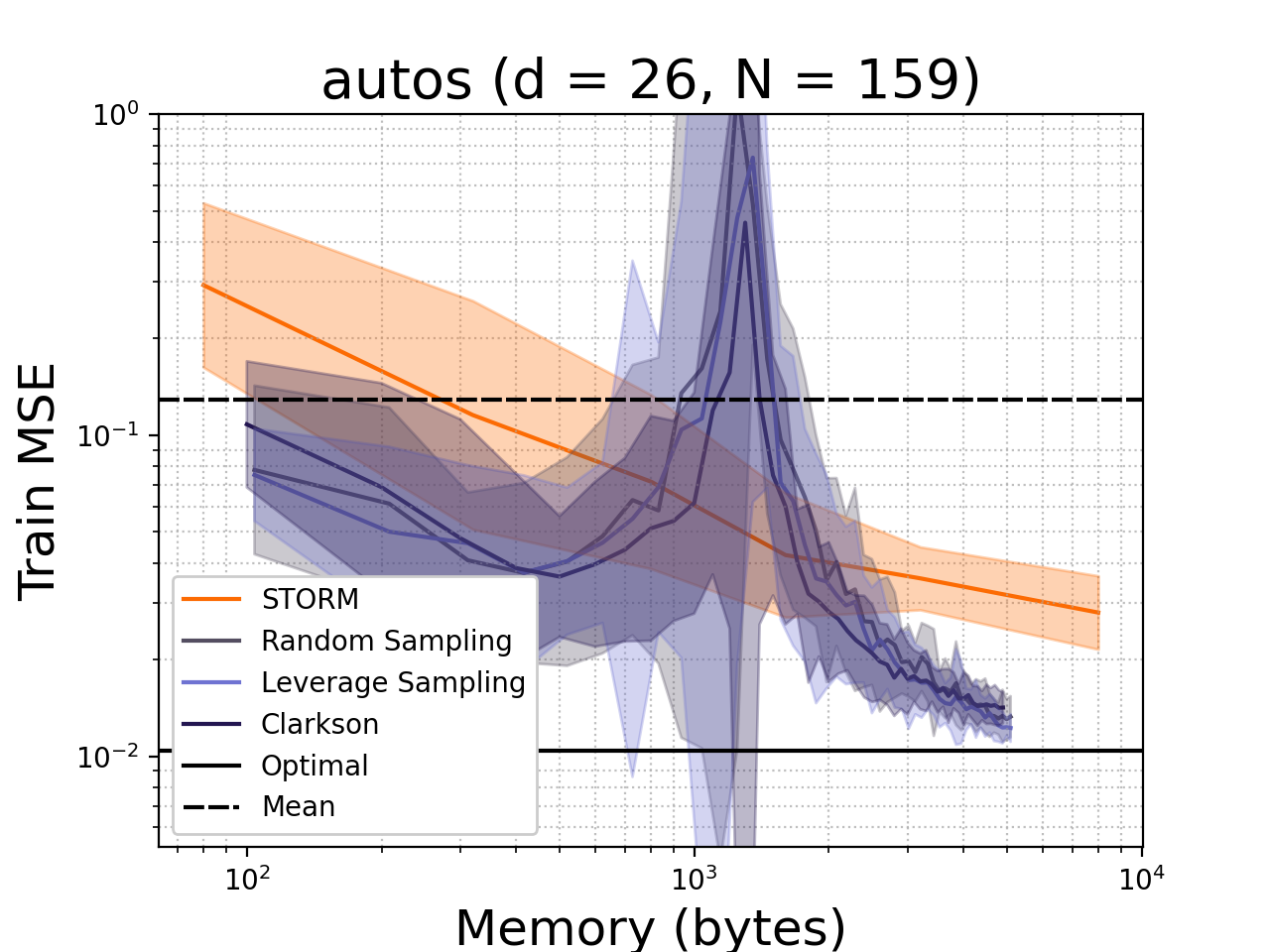}
\end{subfigure}
\begin{subfigure}{0.32\textwidth}
    \includegraphics[width=1.8in]{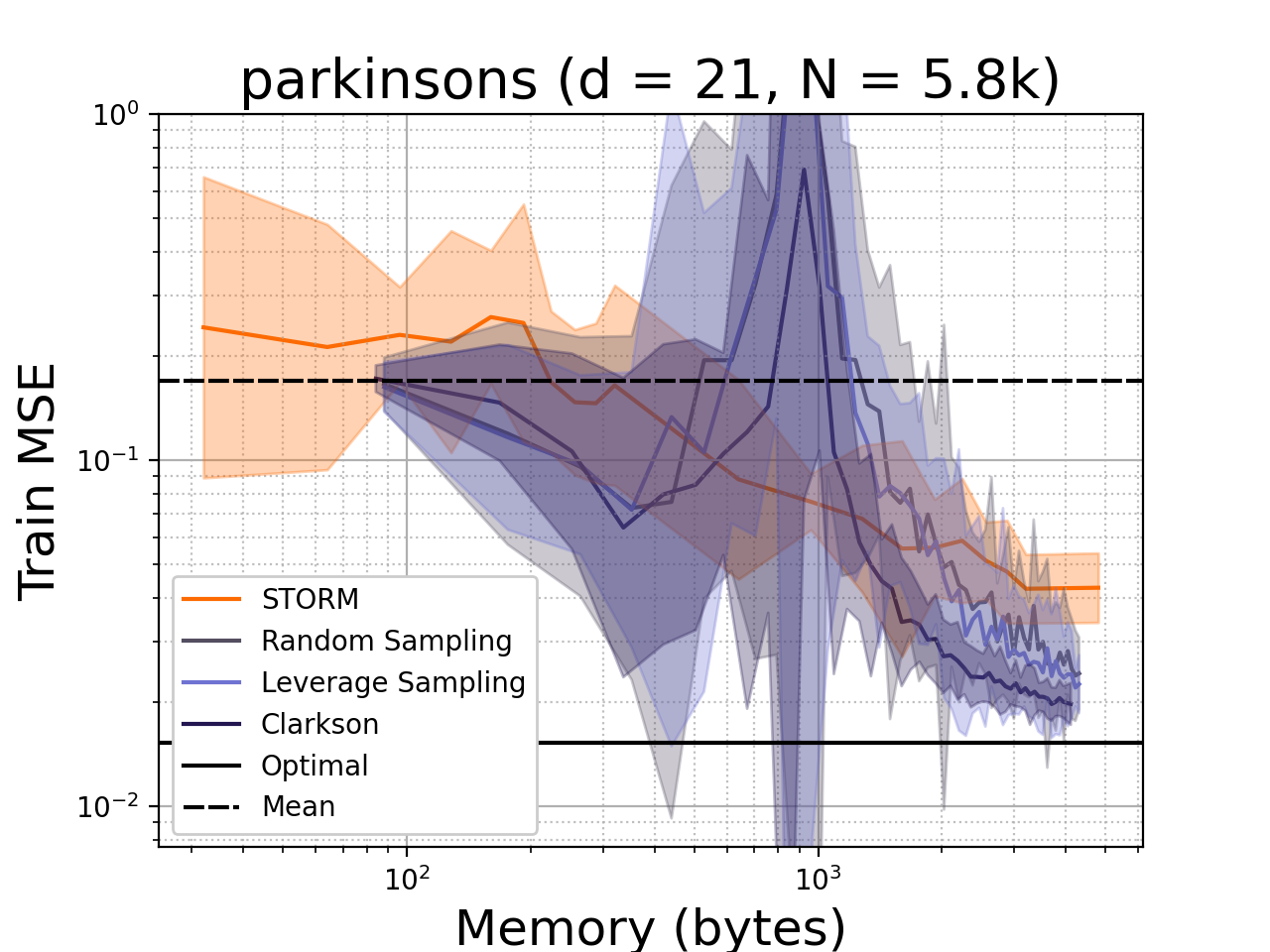}
\end{subfigure}
    \caption{Sketch size vs linear regression models. STORM sketches compare favorably with our sampling and sketch baselines. }
    \label{fig:regressionPlots}
\end{figure}

We also evaluate the linear regression and classification STORM losses on 2D synthetic data (Figure~\ref{fig:synthetic}). We generated synthetic datasets and ran the derivative-free optimizer on a STORM sketch for 100 iterations. We used $R = 100$ for both experiments, with $p = 4$ for regression and $p = 1$ for the classification loss. 

\begin{figure}[H]
    \centering
    \includegraphics[height=1.6in]{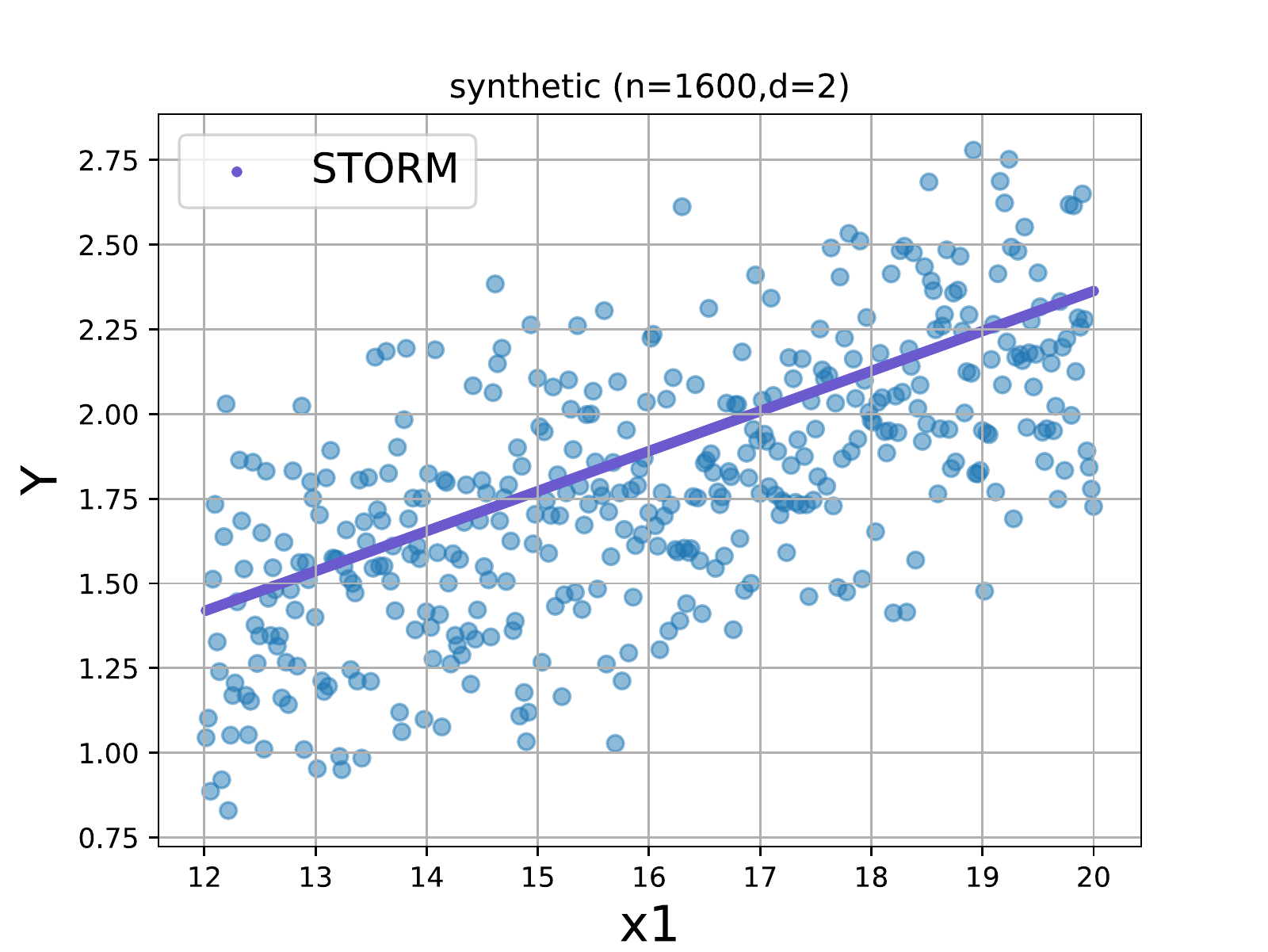}
    \includegraphics[height=1.6in]{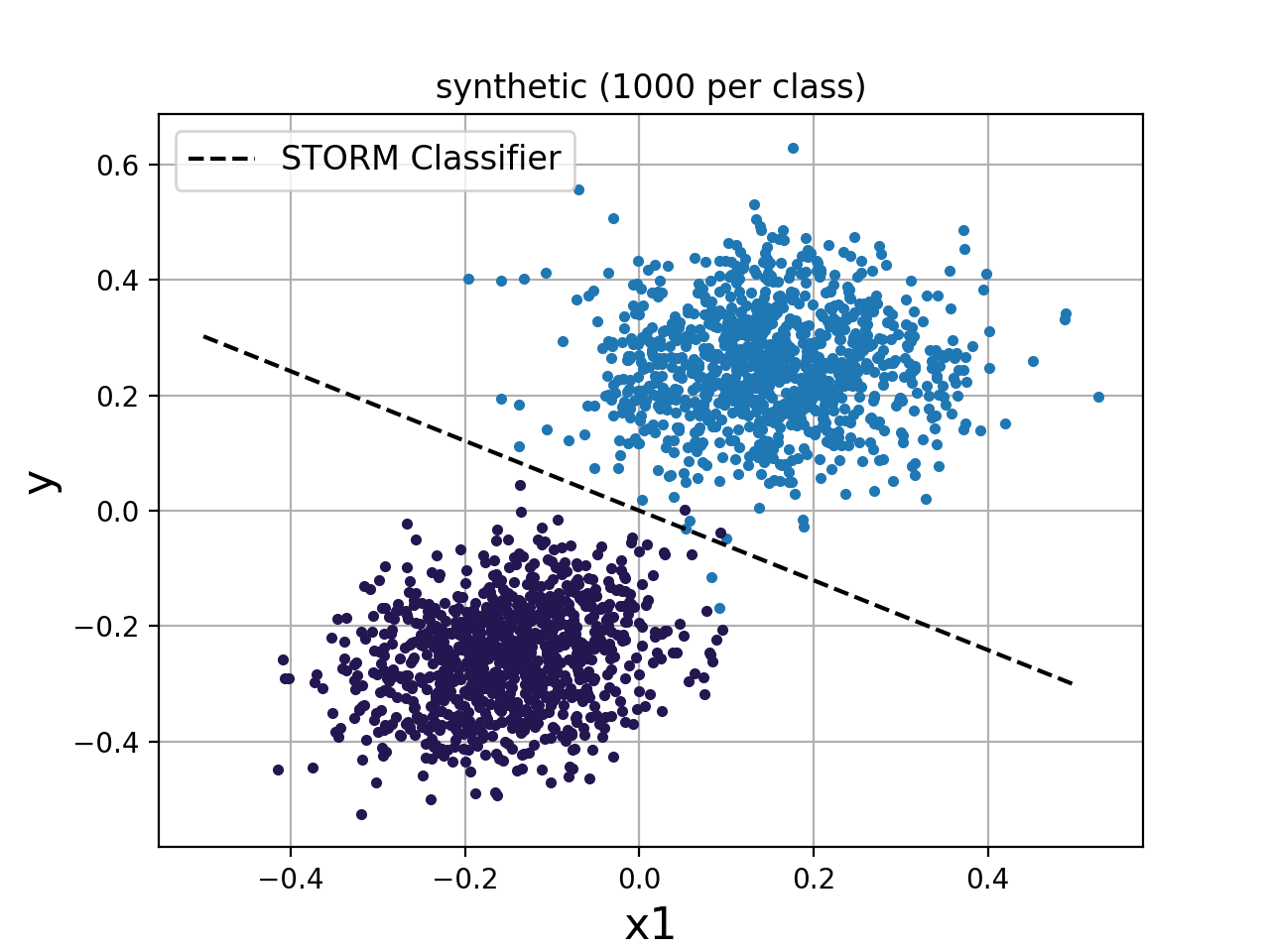}
    \caption{Regression and classification STORM losses on synthetic 2D datasets. }
    \label{fig:synthetic}
\end{figure}

\section{Discussion}

STORM is a scalable way to solve ERM problems, with an efficient streaming implementation and quality risk approximation. While STORM provides a pointwise and non-smooth approximation to the empirical risk, we find that few estimators (small $R$) are necessary to obtain a model parameter $\theta$ similar to the one obtained by minimizing the full loss summation. Our sketch-based loss estimators are sufficiently sharp to compete with sampling and linear algebra baselines, while also naturally accommodating regularization, streaming settings and many useful surrogate losses. Given the simplicity and scalability of the sketch, we expect that STORM will enable distributed learning via ERM on edge devices.

\newpage
\section{Broader Impacts}
The prospect of retaining and training useful models solely on the edge eliminates the opportunity for data interception or leakage, critical to keeping data secure and private. There are significantly reduced privacy concerns if no data is transmitted. This is key to the application and widespread adoption of machine learning in many industries, particularly in industries such as in healthcare where is there is a large demand for machine learning but similarly great data security concerns, particularly with the moral and significant financial ramifications. 
An important implication is the prospective elimination of central data repositories. The danger of major data leaks is curbed without a central repository.
Furthermore, data transmission is one of the most energy consuming tasks devices on the edge will undergo, especially if the data is in the form of video or audio. Due to the proliferation of edge devices, reducing the data transmission energy consumption of edge devices is one of the most pressing problems in machine learning. STORM allows models to be trained at the edge with minimal storage, eliminating the privacy and energy consumption of data transmission and storage.

\bibliographystyle{plain}
\bibliography{references}

\newpage
\section*{Appendix}

We provide proofs for our theorems and further discussion of our classification surrogate losses.

\subsection*{Proof of Theorem~\ref{thm:stormLoss}}
\setcounter{theorem}{0}
\begin{theorem}
The set of STORM-approximable functions $S_L$ contains all LSH collision probabilities and is closed under addition, subtraction, and multiplication.
\end{theorem}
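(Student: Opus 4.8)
The plan is to treat a function $f(\theta,x)$ as STORM-approximable precisely when some RACE-style sketch construction yields an unbiased estimator of its empirical average $\frac{1}{n}\sum_{i} f(\theta,x_i)$, and then verify that the three operations preserve this property. First I would dispatch the base case. For a single LSH family with collision probability $k(\theta,x)$, the expected count in cell $[r, l_r(\theta)]$ equals $\sum_i \Pr[l_r(x_i) = l_r(\theta)] = \sum_i k(\theta,x_i)$ by Definition~\ref{def:lsh}; averaging over the $R$ rows gives an unbiased estimator of $\frac{1}{n}\sum_i k(\theta,x_i)$. This is exactly the RACE guarantee, so every collision probability lies in $S_L$.

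Next, closure under addition and subtraction follows from linearity of expectation. Given unbiased estimators $\hat{A},\hat{B}$ for $f_1,f_2\in S_L$ produced by separate sketches, the combination $\hat{A}\pm\hat{B}$ is an unbiased estimator of $\frac{1}{n}\sum_i (f_1\pm f_2)(\theta,x_i)$, with no independence required; equivalently, a single sketch may store both sets of increments and combine at query time. Note that subtraction poses no difficulty even though the raw counts are nonnegative integers, since the difference of the two averages is still unbiased for the difference of the target functions.

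The substantive work is multiplication, and the crux I must handle carefully is that the product of two \emph{independent} estimators estimates the product of the averages, $(\frac{1}{n}\sum_i f_1)(\frac{1}{n}\sum_j f_2)$, rather than the average of the pointwise product $\frac{1}{n}\sum_i f_1(\theta,x_i)f_2(\theta,x_i)$ that closure actually demands. The fix is to form the product at the level of individual stream elements via LSH concatenation: drawing $l_1$ and $l_2$ independently and hashing each element to the composite key $(l_1(x),l_2(x))$, the per-element collision probability factorizes as $\Pr[l_1(x)=l_1(\theta)\ \text{and}\ l_2(x)=l_2(\theta)] = k_1(\theta,x)\,k_2(\theta,x)$, so the resulting sketch directly estimates $\frac{1}{n}\sum_i k_1(\theta,x_i)k_2(\theta,x_i)$. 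Powers of a single collision probability are obtained the same way by concatenating independent copies of one family, which is precisely the mechanism behind the PRP construction used later.

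To extend multiplication from bare collision probabilities to arbitrary $f_1,f_2\in S_L$, I would invoke distributivity. By the closure operations themselves, every element of $S_L$ is a finite signed sum of monomials, where each monomial is a product of collision probabilities; expanding $f_1\cdot f_2$ yields another finite signed sum of such monomials. Each monomial is realizable by concatenating its LSH functions into one sketch (the product step above), and the signed sum of these monomial estimators is realizable by the addition/subtraction step, so $f_1\cdot f_2\in S_L$. The main obstacle I expect is purely in the bookkeeping: keeping the hash draws across concatenated and summed components independent so that the collision-probability factorization remains exactly valid, and verifying that the resulting combined estimator is still unbiased term by term.
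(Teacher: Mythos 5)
Your proposal is correct and takes essentially the same route as the paper's proof: the base case via the RACE count guarantee, addition/subtraction via linearity across separate sketches, and multiplication via concatenating independently drawn LSH functions so the per-element collision probability factorizes as $k_1(\theta,x)\,k_2(\theta,x)$ --- the paper merely encodes your composite key $(l_1(x),l_2(x))$ through an injective map $\pi:\mathbb{Z}^2\to\mathbb{Z}$, and it makes the same distributive reduction of general products to signed sums of monomials. No gaps.
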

\begin{proof}
It is straightforward to see that STORM can approximate 
$$\sum_{x\in\mathcal{D}}k(x,y)$$
as long as there is an LSH function with the collision probability $k(x,y)$. To prove the theorem, it is sufficient to show that given two LSH collision probabilities $k_1(x,y)$ and $k_2(x,y)$, STORM sketches can approximate the following two functions
$$ f_1(y) = \sum_{x \in \mathcal{D}} k_1(x,y) \pm k_2(x,y) \qquad f_2(y) = \sum_{x\in\mathcal{D}} k_1(x,y)k_2(x,y)$$
Note that one can always write a product of (weighted) sums $\left(\sum_{n}w_n k_n(x,y)\right)\left(\sum_{m}w_m k_m(x,y)\right)$ as the sum of (weighted) products $\sum_{n,m} w_n w_m k_n(x,y) k_m(x,y)$. Therefore, the previous two situations ensure that the set is closed under addition, subtraction and multiplication. 

\textbf{Addition and Subtraction:}
Because of the distributive property of addition, 

$$ \sum_{x \in \mathcal{D}} k_1(x,y) \pm k_2(x,y) = \sum_{x\in\mathcal{D}} k_1(x,y) \pm \sum_{x\in\mathcal{D}} k_2(x,y)$$

One can then construct a STORM sketch $\mathcal{S}_1$ for the $k_1(x,y)$ summation and a second STORM sketch $\mathcal{S}_2$ for $k_2(x,y)$ summation. We can estimate any linear combination of $k_1(x,y)$ and $k_2(x,y)$ by with a weighted sum of $\mathcal{S}_1$ and $\mathcal{S}_2$.

\textbf{Multiplication:} To approximate sums over the product $k_1(x,y)k_2(x,y)$, we rely on LSH hash function compositions. Suppose we have an LSH function $l_1(x)$ with collision probability $k_1(x,y)$ and $l_2(x)$ with $k_2(x,y)$. Consider the hash function $l(x) = \pi(l_1(x),l_2(x))$ where $\pi(\mathbf{a},\mathbf{b})$ is an injective (or unique) mapping from $\mathbb{Z}^2 \to \mathbb{Z}$. An example of such a mapping is the function $\pi(\mathbf{a},\mathbf{b}) = p_1^a p_2^b$ where $p_1$ and $p_2$ are coprime. Since the mapping is injective, this means that $l(x) = l(y)$ only when $l_1(x) = l_1(y)$ and $l_2(x) = l_2(y)$. Therefore, 
$$ \mathrm{Pr}[l(x) = l(y)] = \mathrm{Pr}[l_1(x) = l_1(y)\cap l_2(x) = l_2(y)]$$
Make the choice of the LSH functions $l_1$ and $l_2$ independently, so that the probability factorizes 
$$ = \mathrm{Pr}[l_1(x) = l_1(y)]\mathrm{Pr}[l_2(x) = l_2(y)] = k_1(x,y)k_2(x,y)$$
Therefore, one can construct a STORM sketch for the product using the LSH function $l(x) = \pi(l_1(x),l_2(x))$. 

\end{proof}

\subsection*{Proof of Theorem~\ref{thm:regression}}

\begin{theorem}
\label{thm:regression}
When $p\geq 2$, the PRP collision probability $g([\theta,-1],[\mathbf{x},y])$is a convex surrogate loss for the linear regression objective such that 
$$ \operatorname*{arg\,min}_\theta \sum_{\mathbf{x},y\in\mathcal{D}} g([\theta,-1],[\mathbf{x},y]) =  \operatorname*{arg\,min}_\theta \|\mathbf{y} - \mathbf{X}\theta\| $$
\end{theorem}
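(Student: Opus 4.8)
The plan is to reduce everything to a one-dimensional analysis of the surrogate as a function of the signed residual, and then treat convexity and the location of the minimizer as separate steps. Write $t = \langle [\theta,-1],[\mathbf{x},y]\rangle = \theta^\top\mathbf{x} - y$ for the signed residual of a single example, so that the least-squares objective is $\sum_i t_i^2$ and the surrogate objective is $\sum_i \phi(t_i)$, where $\phi(t) := g([\theta,-1],[\mathbf{x},y])$ regarded as a function of the scalar $t \in [-1,1]$ (the domain forced by $\cos^{-1}$, consistent with the unit-sphere scaling assumed for the inner-product hash). Since $t$ is affine in $\theta$, convexity of $\sum_i \phi(t_i)$ in $\theta$ follows immediately from convexity of $\phi$ on $[-1,1]$, because affine precomposition and finite sums both preserve convexity.

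First I would simplify $\phi$ using the identity $\cos^{-1}(-t) = \pi - \cos^{-1}(t)$. Setting $u = \tfrac{1}{\pi}\cos^{-1}(t) \in [0,1]$, the second term collapses to $\tfrac12 u^p$ and the first to $\tfrac12 (1-u)^p$, so $\phi(t) = \psi(u)$ with $\psi(u) = \tfrac12\big[(1-u)^p + u^p\big]$. Two structural facts drop out at once: $\psi$ is symmetric about $u=\tfrac12$, and $t \mapsto u$ carries $t \mapsto -t$ to $u \mapsto 1-u$, so $\phi$ is \emph{even} in $t$ with its unique minimum at $t=0$ (i.e. $u=\tfrac12$), exactly where the per-example squared loss is minimized.

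For convexity I would differentiate through the composition $\phi(t)=\psi(u(t))$ to get $\phi''(t) = \psi''(u)\,(u'(t))^2 + \psi'(u)\,u''(t)$. For $p\ge 2$ one checks $\psi''(u) = \tfrac{p(p-1)}{2}\big[u^{p-2} + (1-u)^{p-2}\big] \ge 0$, so the first term is nonnegative. The second term is the subtle one, but it is also nonnegative: $\psi'(u) = \tfrac{p}{2}\big[u^{p-1} - (1-u)^{p-1}\big]$ is negative exactly when $u<\tfrac12$, i.e. when $t>0$, while $u''(t) = -\tfrac{t}{\pi(1-t^2)^{3/2}}$ is also negative exactly when $t>0$; the two factors therefore always share a sign (and both vanish at $t=0$), so $\psi'(u)\,u''(t) \ge 0$ on $(-1,1)$. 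Hence $\phi''(t) \ge 0$, $\phi$ is convex, and the surrogate objective is convex in $\theta$.

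The main obstacle is the \emph{exact} equality of the two argmins. The natural handle is the stationarity condition: least squares is stationary when $\sum_i t_i\,\mathbf{x}_i = 0$, whereas the surrogate is stationary when $\sum_i \phi'(t_i)\,\mathbf{x}_i = 0$. A Taylor expansion at $t=0$ gives $\phi(t) = \phi(0) + C\,t^2 + O(t^4)$ with $C = \tfrac{2p(p-1)}{\pi^2}(\tfrac12)^p > 0$, so $\phi'(t) = 2C\,t + O(t^3)$ and the two conditions agree to leading order; in the realizable (interpolating) case the minimizer drives every $t_i$ to $0$, both objectives attain their per-example minima simultaneously, and the stated equality holds exactly. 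I would therefore establish the identity cleanly under realizability, and to reach the general claim argue that the even, convex surrogate's unique stationary point coincides with the normal-equation solution. I expect controlling the higher-order contribution $\sum_i O(t_i^3)\,\mathbf{x}_i$ to be the crux, since away from realizability $\phi$ is not exactly proportional to $t^2$ (for instance $\phi(t) = \tfrac{1}{\pi^2}\arcsin^2(t) + \tfrac14$ when $p=2$); the equality of minimizers, not the convexity, is the step that genuinely needs care.
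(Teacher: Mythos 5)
Your convexity argument is correct and is essentially the paper's computation done in a cleaner coordinate system --- and, notably, it repairs the paper's version. The paper works directly in $\mathbf{a}=[\theta,-1]$: it computes $\nabla_a\mathcal{L}(\mathbf{a},\mathbf{b})$, writes the Hessian as $\mathbf{b}^{\top}\mathbf{b}\,\rho(\mathbf{a},\mathbf{b})$, and asserts that $\rho>0$ is ``easy to see.'' But in the paper's displayed $\rho$, the term $p f(\mathbf{a},\mathbf{b})^{p-1}-p f(\mathbf{a},-\mathbf{b})^{p-1}$ appears \emph{without} the factor $\langle\mathbf{a},\mathbf{b}\rangle$ produced by differentiating $(1-|\langle\mathbf{a},\mathbf{b}\rangle|^2)^{-1/2}$ (there is also a typo: one $f(\mathbf{a},\mathbf{b})^{p-2}$ should read $f(\mathbf{a},-\mathbf{b})^{p-2}$). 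As literally written, $\rho$ goes to $-\infty$ as $\langle\mathbf{a},\mathbf{b}\rangle\to-1$, since the $(1-s^2)^{-3/2}$ term dominates the $(1-s^2)^{-1}$ term. Your decomposition $\phi(t)=\psi(u)$ with $u=\tfrac{1}{\pi}\cos^{-1}(t)$ and $\psi(u)=\tfrac12\left[u^p+(1-u)^p\right]$, together with the observation that $\psi'(u)$ and $u''(t)=-\tfrac{t}{\pi(1-t^2)^{3/2}}$ always share a sign, is exactly the sign bookkeeping that restores nonnegativity once the missing factor is reinstated; your $\phi''(t)\geq 0$ computation is the corrected form of the paper's $\rho\geq 0$ claim, and your affine-precomposition step delivers convexity in $\theta$ just as the paper's restriction-to-a-convex-set remark does.

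On the argmin identity, the step you flag as the crux is precisely the step the paper does not carry out. The paper's entire ``location of minima'' argument is per-example: it observes that $\nabla_a\mathcal{L}(\mathbf{a},\mathbf{b})=0$ exactly when $\langle\mathbf{a},\mathbf{b}\rangle=0$ (where $f(\mathbf{a},\mathbf{b})=f(\mathbf{a},-\mathbf{b})$), and from this concludes that the aggregate surrogate and the aggregate least-squares objectives share a minimizer. That inference is valid only when all residuals can be driven to zero simultaneously --- i.e., exactly your realizability case --- or under some symmetry of the residuals; in general the stationarity conditions $\sum_i\phi'(t_i)\mathbf{x}_i=0$ and $\sum_i t_i\mathbf{x}_i=0$ genuinely differ, as your $p=2$ identity $\phi(t)=\tfrac{1}{\pi^2}\arcsin^2(t)+\tfrac14$ makes concrete (two suitably chosen points in one dimension already separate the two minimizers). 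So the higher-order contribution $\sum_i O(t_i^3)\,\mathbf{x}_i$ that worried you cannot be controlled in general, and you should not try: the honest statement, which both your realizability argument and the paper's per-example stationarity argument support, is that the surrogate has the same \emph{per-example} minimizer --- hence the same argmin in the interpolating regime and an approximately equal one for small residuals --- not exact equality of argmins for arbitrary data. Your attempt reaches the full depth of the paper's proof; what it leaves open, the paper leaves open too.
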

\begin{proof}

For the surrogate ERM problem to have the same solution as the linear regression ERM problem, it is sufficient to show two things: that the surrogate loss is convex and that the global minima of the surrogate loss and the linear regresssion loss appear in the same location. The surrogate loss is 
$$ g([\theta,-1],[\mathbf{x},y]) = \frac{1}{2}\left(1 - \frac{1}{\pi}\cos^{-1}({\langle [\theta,-1],[\mathbf{x},y]\rangle})\right)^{p} + \frac{1}{2}\left(1 - \frac{1}{\pi}\cos^{-1}(- {\langle [\theta,-1],[\mathbf{x},y]\rangle})\right)^{p} $$
and the corresponding empirical risk minimization problem is 
$$\theta^\star = \operatorname*{arg\,min}_\theta \sum_{\mathbf{x},y \in \mathcal{D}} g([\theta,-1],[\mathbf{x},y])$$
For the sake of notation, we will put $a = [\theta,-1]$, $b = [\mathbf{x},y]$, and 
$$ f(\mathbf{a},\mathbf{b}) = \left(1 - \frac{1}{\pi}\cos^{-1}({\langle [\theta,-1],[\mathbf{x},y]\rangle})\right)\qquad \mathcal{L}(\mathbf{a},\mathbf{b}) = \frac{1}{2}f(\mathbf{a},\mathbf{b})^p + \frac{1}{2}f(\mathbf{a},-\mathbf{b})^p$$
We will use the fact that 
$$ \nabla_a f(\mathbf{a},\mathbf{b}) = \nabla_a f(\mathbf{a},-\mathbf{b}) = \frac{\mathbf{b}}{\pi\sqrt{1- |\langle \mathbf{a},\mathbf{b} \rangle|^2}}$$

\textbf{Location of Minima:} The minimum of the surrogate loss is same as the minimum for least squares linear regression. Using the chain rule, the gradient of the surrogate loss is
\begin{align*}
\nabla_{a} \mathcal{L}(\mathbf{a},\mathbf{b}) = \frac{p \left( f(\mathbf{a},\mathbf{b})^{p-1} -f(\mathbf{a},-\mathbf{b})^{p-1}\right)}{2\pi \sqrt{1-|\langle \mathbf{a},\mathbf{b}\rangle|^2}} b
\end{align*}
When $p = 1$, the gradient is always zero. When $p\geq 2$, the derivative is zero when $\langle \mathbf{a},\mathbf{b}\rangle = \langle [\theta,-1],[\mathbf{x},y]\rangle = 0$ because that is where $f(\mathbf{a},\mathbf{b}) = f(\mathbf{a},-\mathbf{b})$. Thus, the surrogate loss has the same minimizer as the least squares loss. 

\textbf{Convexity:} At index $(i,j)$, the Hessian of the surrogate loss is 
$$[\nabla^2_{a}\mathcal{L}(\mathbf{a},\mathbf{b}) ]_{(i,j)} = \frac{\partial}{\partial a_j} [\nabla_a \mathcal{L}(\mathbf{a},\mathbf{b})]_i =\frac{\partial}{\partial a_j} b_i  \frac{p \left( f(\mathbf{a},\mathbf{b})^{p-1} -f(\mathbf{a},-\mathbf{b})^{p-1}\right)}{2\pi \sqrt{1-|\langle \mathbf{a},\mathbf{b}\rangle|^2}} $$
Thus, the Hessian
$$ \nabla^2_a \mathcal{L}(\mathbf{a},\mathbf{b}) = b^{\top} \nabla_a \left(\frac{p f(\mathbf{a},\mathbf{b})^{p-1} - p f(\mathbf{a},-\mathbf{b})^{p-1}}{2\pi \sqrt{1 - |\langle \mathbf{a},\mathbf{b}\rangle|^2}}\right)$$
The gradient 
\begin{align*}
&\nabla_a \left(\frac{p f(\mathbf{a},\mathbf{b})^{p-1} - p f(\mathbf{a},-\mathbf{b})^{p-1}}{2\pi \sqrt{1 - |\langle \mathbf{a},\mathbf{b}\rangle|^2}}\right) \\ 
= &\left(\frac{p(p-1)f(\mathbf{a},\mathbf{b})^{p-2}}{2\pi\sqrt{1 - |\langle \mathbf{a},\mathbf{b}\rangle|^2}}\nabla_a f(\mathbf{a},\mathbf{b}) + \frac{p(p-1)f(\mathbf{a},-\mathbf{b})^{p-2}}{2\pi\sqrt{1 - |\langle \mathbf{a},\mathbf{b}\rangle|^2}}\nabla_a f(\mathbf{a},-\mathbf{b})\right)\\
& + \left(p f(\mathbf{a},\mathbf{b})^{p-1} - p f(\mathbf{a},-\mathbf{b})^{p-1}\right) \nabla_a \left(\frac{1}{2\pi \sqrt{1 - |\langle \mathbf{a},\mathbf{b}\rangle|^2}}\right)
\end{align*}
Simplifying, we obtain 
$$ \frac{\mathbf{b}}{2\pi} \left(
\frac{p(p - 1) f(\mathbf{a},\mathbf{b})^{p-2} + p(p-1)f(\mathbf{a},\mathbf{b})^{p-2}}{1 - |\langle \mathbf{a},\mathbf{b} \rangle|^2} + \frac{p f(\mathbf{a},\mathbf{b})^{p-1} - p f(\mathbf{a},-\mathbf{b})^{p-1}}{(1 - |\langle \mathbf{a},\mathbf{b} \rangle |^2)^{3/2}}\right) $$

Which gives the following expression for the Hessian
$$ \nabla^2_{\theta} \mathcal{L}(\mathbf{a},\mathbf{b}) =  \textbf{b}^{T}\textbf{b} \rho (\mathbf{a},\mathbf{b}) $$

where 
$$ \rho (\mathbf{a},\mathbf{b}) = \frac{1}{2\pi} \left(
\frac{p(p - 1) f(\mathbf{a},\mathbf{b})^{p-2} + p(p-1)f(\mathbf{a},\mathbf{b})^{p-2}}{1 - |\langle \mathbf{a},\mathbf{b} \rangle|^2} + \frac{p f(\mathbf{a},\mathbf{b})^{p-1} - p f(\mathbf{a},-\mathbf{b})^{p-1}}{(1 - |\langle \mathbf{a},\mathbf{b} \rangle |^2)^{3/2}}\right) $$

It is easy to see that $\rho (\mathbf{a},\mathbf{b}) >0 \  \ \forall \langle \mathbf{a},\mathbf{b} \rangle \in (-1, 1)$. Hence, the Hessian is positive semidefinite and the function is convex in $\mathbf{a}$. Also note that the function is convex in $\theta$, since the restriction of the last dimension of $\mathbf{a}$ to $-1$ is the restriction of a convex function to a convex set.

\end{proof}

\subsection*{Proof of Theorem~\ref{thm:class}}

\begin{figure*}[t]
\centering
\includegraphics[width=4in,keepaspectratio]{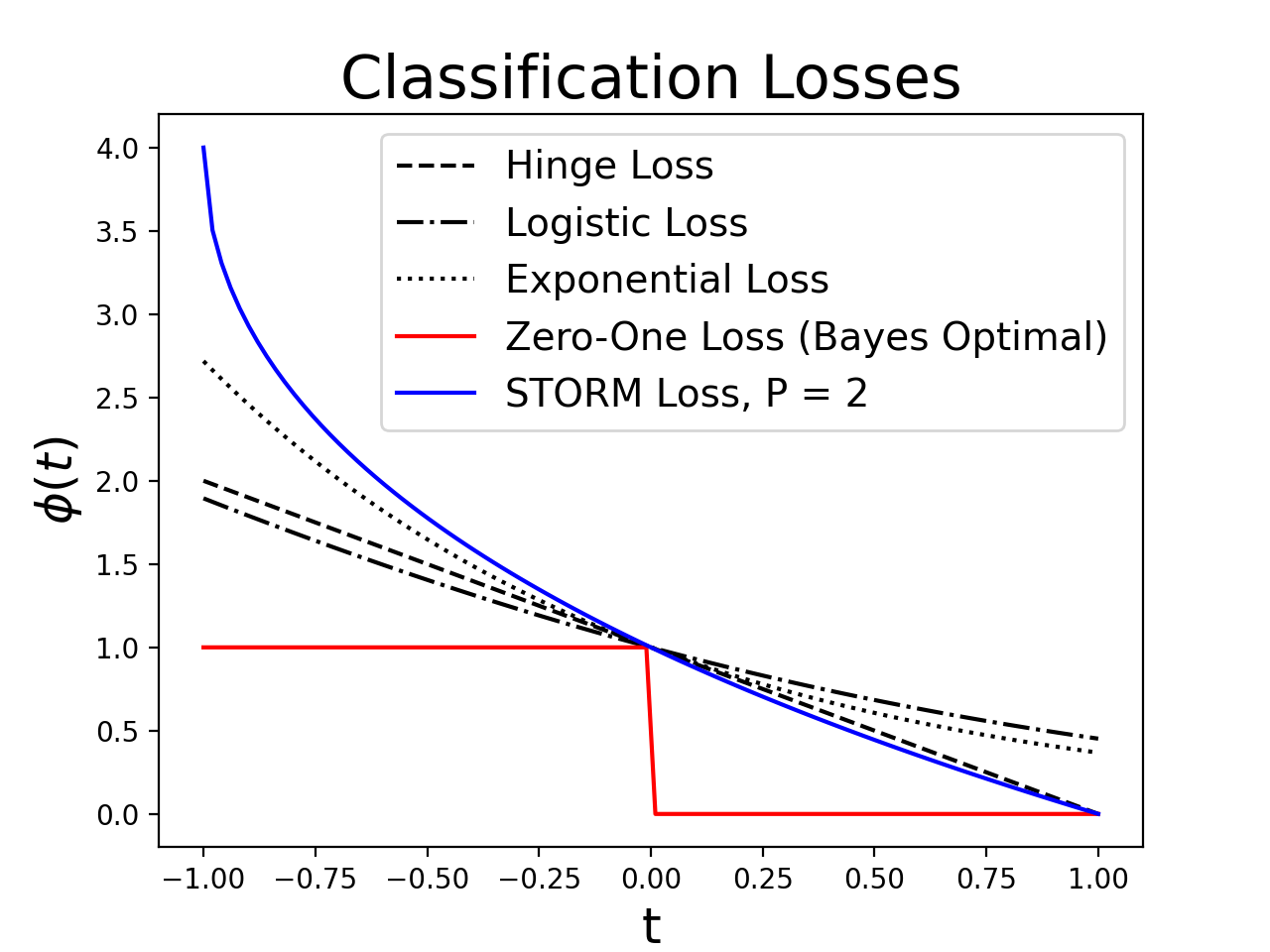}
\caption{Comparison of various classification losses.}
\label{fig:class_losses}
\end{figure*}

\begin{theorem}
\label{thm:class}
Consider labels $y\in\{-1,1\}$. The loss function $g(\theta,[\mathbf{x},y])$ for the linear hyperplane classifier is a classification-calibrated margin loss and is STORM-approximable. 
$$ g(\theta,[\mathbf{x},y]) = 2^p\left(1 - \frac{1}{\pi} \cos^{-1}(-y\langle \theta, \mathbf{x} \rangle) \right)^{p}$$
\end{theorem}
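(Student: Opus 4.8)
The plan is to treat $g$ as a margin loss and dispatch the two assertions --- STORM-approximability and classification-calibration --- in turn. First I would note that $g$ depends on $\theta,\mathbf{x},y$ only through the functional margin $m = y\langle\theta,\mathbf{x}\rangle$, so it is a genuine margin loss $\phi(m) = 2^p\left(1 - \tfrac1\pi\cos^{-1}(-m)\right)^p$ on the domain $m\in[-1,1]$ (the inner-product hash assumes unit-sphere inputs). The one simplification I would record up front is the reflection identity $\cos^{-1}(-m) = \pi - \cos^{-1}(m)$, which collapses the loss to the transparent form
\[ \phi(m) = \left(\tfrac{2}{\pi}\cos^{-1}(m)\right)^p, \]
a strictly decreasing function running from $\phi(-1)=2^p$ to $\phi(1)=0$. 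Writing $t = t(m) = \tfrac1\pi\cos^{-1}(m)\in[0,1]$ (decreasing in $m$, equal to $\tfrac12$ at $m=0$) is the reparametrization that makes the rest of the argument clean.

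For STORM-approximability I would appeal directly to Theorem~\ref{thm:stormLoss}. The asymmetric inner-product hash applied to query $\theta$ and data point $-y\mathbf{x}$ has collision probability $1 - \tfrac1\pi\cos^{-1}(\langle\theta,-y\mathbf{x}\rangle) = 1 - \tfrac1\pi\cos^{-1}(-m)$, which is an LSH collision probability and hence lies in $S_L$. Since $S_L$ is closed under multiplication, concatenating $p$ independent copies of this hash (the injective $\pi(\cdot,\cdot)$ composition from the proof of Theorem~\ref{thm:stormLoss}) realizes the $p$-th power, and the leading constant $2^p$ merely rescales the returned estimate. Hence $\sum_{\mathbf{x},y} g \in S_L$.

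The substance is calibration. Following the definition via the Bayes-optimal comparison~\cite{bartlett2006convexity}, I would fix $\eta = \Pr[y=1\mid\mathbf{x}]$ and study the conditional risk $C_\eta(\alpha) = \eta\,\phi(\alpha) + (1-\eta)\,\phi(-\alpha)$, with the goal of showing every minimizer $\alpha^\star$ obeys $\mathrm{sign}(\alpha^\star) = \mathrm{sign}(2\eta-1)$, so that STORM's optimum matches the Bayes rule. Differentiating and using $\phi'(m) = -\tfrac{2^p p}{\pi\sqrt{1-m^2}}\,t^{p-1}$ together with $t(-\alpha)=1-t(\alpha)$, the common $\sqrt{1-\alpha^2}$ factor drops out and stationarity reduces to $\left(\tfrac{t}{1-t}\right)^{p-1} = \tfrac{1-\eta}{\eta}$. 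Because $t\mapsto t/(1-t)$ is increasing while $m\mapsto t(m)$ is decreasing, $\eta>\tfrac12$ forces $t<\tfrac12$ and hence $\alpha^\star>0$, with the mirror conclusion for $\eta<\tfrac12$; monotonicity of the bracketed term in $C_\eta'$ also certifies that this interior critical point is the minimum. For the degenerate case $p=1$ the ratio is identically $1$, $C_\eta$ is strictly monotone with slope sign $\mathrm{sign}(1-2\eta)$, and the minimizer sits at the correct endpoint $\alpha^\star = \mathrm{sign}(2\eta-1)$. In every case the minimizing margin carries the Bayes-optimal sign.

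The main obstacle is this last step. Because $\phi$ is not convex --- for $p=1$ it is in fact concave on $m>0$ --- I cannot use the convenient convex shortcut ``$\phi$ differentiable at $0$ with $\phi'(0)<0$'' from~\cite{bartlett2006convexity}, and must instead control the global minimizer of $C_\eta$ by hand. The care lies in ruling out spurious wrong-sign minima and in handling boundary minimizers on the compact domain $[-1,1]$: small $p$ pushes the optimum to an endpoint while larger $p$ yields an interior critical point, so I would unify both regimes through the single monotonicity statement that the bracket in $C_\eta'(\alpha)$ is increasing in $\alpha$ and changes sign exactly once.
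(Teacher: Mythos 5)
Your proof is correct, but the calibration half takes a genuinely different route from the paper's. The paper verifies calibration via the convex shortcut of Bartlett et al.: it asserts $\phi$ is convex for $p \geq 2$ (pointing back to the Hessian argument of the regression theorem) and then checks $\frac{d}{dt}\phi(t) < 0$ at $t=0$. You instead work from the definition, analyzing the conditional risk $C_\eta(\alpha)=\eta\,\phi(\alpha)+(1-\eta)\,\phi(-\alpha)$ directly: the reflection identity $t(-\alpha)=1-t(\alpha)$ reduces stationarity to $\left(\tfrac{t}{1-t}\right)^{p-1}=\tfrac{1-\eta}{\eta}$, and the two monotonicities pin down the sign of the unique minimizer. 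What your route buys is coverage of $p=1$, where $\phi$ is genuinely non-convex (concave on $m>0$) and the paper's shortcut is inapplicable --- the paper only footnotes that the non-convex sufficient conditions ``are more complicated,'' yet its own experiments use $p=1$ for the classification loss, so your endpoint analysis closes a case the paper's proof leaves open. What the paper's route buys is brevity: for $p\geq 2$ a one-line derivative check suffices once convexity is granted. One imprecision in your writeup: your blanket claim that $\phi$ is not convex holds only for $p=1$; for $p\geq 2$ on the domain $[-1,1]$ one can verify $\phi''\geq 0$ (the sign of $\phi''$ reduces to $(p-1)\sqrt{1-m^2} - m\cos^{-1}(m)\geq 0$, which holds there), so the convex shortcut was in fact available in that regime --- this does not damage your proof, which goes through uniformly in $p$, but the motivation is overstated. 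Incidentally, the paper's stated value $\phi'(0)=-1/\pi$ is off by a factor of $2p$ (the correct value is $-2p/\pi$), though the sign, which is all that matters, is right. The approximability half of your argument matches the paper's (asymmetric inner-product hash applied to $-y\mathbf{x}$), and you are in fact more explicit than the paper in invoking the multiplication closure of Theorem~\ref{thm:stormLoss} to realize the $p$-th power by hash concatenation and in noting that the factor $2^p$ is a mere rescaling of the returned estimate.
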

\begin{proof}
First, we show that the loss is classification-calibrated. Then, we show that the loss can be estimated using STORM. 

\textbf{Loss is Classification-Calibrated: } A necessary and sufficient condition for a convex\footnote{For non-convex $\phi(t)$, the sufficient conditions are more complicated.} loss function $\phi(t)$ to be classification-calibrated is for $\frac{d}{dt} \phi(t) < 0$ at $t = 0$. Here, $t = y h(x)$, where $h(x)$ is the model. For a linear hyperplane classifier, $h(x) = \langle \theta,\mathbf{x}\rangle$. The loss is therefore 

$$ \phi(t) = 2^p\left(1 - \frac{1}{\pi}\cos^{-1}(-t)\right)^p$$

$\phi(t)$ is convex when $p \geq 2$ for the same reasons discussed in the proof of Theorem~\ref{thm:regression}. Note that the simple asymmetric LSH for the inner product that we have used throughout the paper\footnote{There are other asymmetric inner product LSH functions without this requirement, and in practice one usually scales the data to make the condition true.} requires $t \in [-1,1]$. The derivative is
$$ \frac{d}{dt} \phi(t) =2^p p\left(1 - \frac{1}{\pi}\cos^{-1}(-t)\right)^{p-1} \frac{-1}{\pi \sqrt{1 - t^2}}$$
At the origin, $\frac{d}{dt} \phi(t)$ is $-1/\pi$. Therefore, the loss is classification calibrated. Figure~\ref{fig:class_losses} compares our STORM surrogate classification loss against popular margin losses.

\textbf{Loss is STORM-Approximable:} Consider the asymmetric LSH function for the inner product where we premultiply $\mathbf{x}$ by $-y$. The collision probability under this LSH function is 
$$ k(\theta,\mathbf{x}) = \left(1 - \frac{1}{\pi}\cos^{-1}(\langle \theta,-y\mathbf{x}\rangle)\right)$$
as desired. 
\end{proof}

\end{document}